\newcommand{\scalar}[2]{\langle #1 , #2 \rangle}
\def\1{\bm{1}}
\def\vf{{\bm{f}}}
\def\vh{{\bm{h}}}
\def\vu{{\bm{u}}}
\def\vv{{\bm{v}}}
\def\mC{{\bm{C}}}
\def\mF{{\bm{F}}}
\def\mP{{\bm{P}}}
\def\mT{{\bm{T}}}
\DeclareMathAlphabet{\mathsfit}{\encodingdefault}{\sfdefault}{m}{sl}
\SetMathAlphabet{\mathsfit}{bold}{\encodingdefault}{\sfdefault}{bx}{n}
\def\sA{{\mathbb{A}}}
\def\sR{{\mathbb{R}}}
\def\sS{{\mathbb{S}}}
\newcommand{\R}{\mathbb{R}}
\newtheorem{lemma}{Lemma}
\newcommand{\FGW}{\operatorname{FGW}}
\newcommand{\TFGW}{\operatorname{TFGW}}
\newcommand{\integ}[1]{{[\![#1]\!]}}
\definecolor{myred}{HTML}{dd3c3c}
\title{Template based Graph Neural Network with \\ Optimal Transport Distances}
\author{C\'{e}dric Vincent-Cuaz  \textsuperscript{1}, R\'{e}mi Flamary \textsuperscript{2}, Marco Corneli \textsuperscript{1,3}, Titouan Vayer \textsuperscript{4}, Nicolas Courty \textsuperscript{5}  \\
	Univ. C{\^o}te d{'}Azur, Inria, Maasai, CNRS, LJAD \textsuperscript{1}; IP Paris, CMAP, UMR 7641 \textsuperscript{2}; MSI \textsuperscript{3};\\ Univ. Lyon, Inria, CNRS, ENS de Lyon, LIP UMR 5668 \textsuperscript{4}; Univ. Bretagne-Suf, CNRS, IRISA \textsuperscript{5}. \\
	\texttt{\{cedric.vincent-cuaz; marco.corneli; titouan.vayer\}@inria.fr} \\
	\texttt{remi.flamary@polytechnique.edu};   \texttt{nicolas.courty@irisa.fr}
}
\begin{document}

	\maketitle
	\begin{abstract}
		Current Graph Neural Networks (GNN) architectures generally rely on two important components: node features embedding through message passing, and aggregation with a specialized form of pooling. The structural (or topological) information is implicitly taken into account in these two steps. We propose in this work a novel point of view, which places distances to some learnable graph templates at the core of the graph representation. This distance embedding is constructed thanks to an optimal transport distance: the Fused Gromov-Wasserstein (FGW) distance, which encodes simultaneously feature and structure dissimilarities by solving a soft graph-matching problem. We postulate that the vector of FGW distances to a set of template graphs has a strong discriminative power, which is then fed to a non-linear classifier for final predictions. Distance embedding can be seen as a new layer, and can leverage on existing message passing techniques to promote sensible feature representations. Interestingly enough, in our work the optimal set of template graphs is also learnt in  an end-to-end fashion by differentiating through this layer. After describing the corresponding learning procedure, we empirically validate our claim on several synthetic and real life graph classification datasets, where our method is competitive or surpasses kernel and GNN state-of-the-art approaches. We complete our experiments by an ablation study and a sensitivity analysis to parameters.
	\end{abstract}
	\allowdisplaybreaks
	\section{Introduction}
	

Attributed graphs are characterized by {\em i)} the relationships between the nodes of the graph (structural or topological information) and {\em ii)} some specific features or attributes endowing the nodes themselves. Learning from those data is ubiquitous in many research areas
\citep{battaglia2018relational}, {\em e.g.} image analysis \citep{harchaoui2007image, bronstein2017geometric}, brain connectivity
\cite{ktena-distance-2017}, biological compounds \cite{jumper2021highly} or
social networks \cite{yanardag-deep-2015}, to name a few. Various methodologies approach the inherent complexity of those data, such as signal processing \cite{Shuman2013TheEF}, Bayesian and kernel methods on graphs \cite{NEURIPS2018_1fc21400, kriege2020survey} or more recently Graph Neural Networks (GNN) \cite{wu2020comprehensive} in the framework of the geometric deep learning
\cite{bronstein2017geometric, Bronstein2021GeometricDL}.

We are interested in this work in the classification of attributed graphs {\bf at the instance level}.
One existing approach consists in designing kernels that leverage topological properties of the observed graphs
\cite{borgwardt2005shortest, feragen2013scalable,gartner2003graph,
shervashidze2009efficient}. For instance, the popular
Weisfeiler-Lehman (WL) kernel \cite{shervashidze2011weisfeiler} iteratively aggregates for each node the features of its $k$-hop neighborhood. Alternative approaches aim at learning vectorial
representations of the graphs that can encode the graph structure (\emph{i.e. graph
representation learning} \cite{Chami2022}). 
In this domain, GNN lead to state-of-the-art performances with 
end-to-end learnable embeddings \cite{wu2020comprehensive}. 
At a given layer, these architectures typically
learn the node embeddings via local permutation-invariant transformations
aggregating its neighbour features~\cite{maron2018invariant,kipf2016semi, hamilton2017inductive,
xu2018powerful}. In order to obtain a representation of the whole graph suitable for classification, GNNs finally operate a pooling \cite{knyazev2019understanding, mesquita2020rethinking} of
the node embeddings, either global (\textit{e.g} summation over nodes
\cite{xu2018powerful}), or hierarchical (\textit{e.g} by iteratively clustering nodes \cite{zhang2018end,ying2018hierarchical,  lee2019self}). B

Another line of works targets the construction of meaningful distances that integrate simultaneously the structural and feature information, and that are based on optimal transport (OT) \cite{villani2009optimal, peyre-computational-2020}. Originally designed to compare probability distributions based on a geometric notion of optimality, it allows defining very general loss functions between various objects, modeled as probability distributions. In a nutshell, it proceeds by constructing a \emph{coupling} between the distributions that minimizes a specific \emph{cost}. 
Some approaches dealing with graphs rely on non-parametric models that first embed the graphs into a vectorial space and then match them via OT \cite{NikolentzosMV17, Togninalli19, kolouri2020wasserstein, maretic2019got}. Recently \citep{chen2020optimal} proposed the OT-GNN model, that embeds a graph as a vector of the Wasserstein distances between the nodes' embeddings (after GNN pre-processing) and learnt point clouds, acting as templates.

Building further from OT variants, the Gromov-Wasserstein (GW) distance~\cite{memoli2011gromov} directly handles graphs through the symmetric matrix $\mC$ that encodes the distance/similarity between every pairs of nodes (\textit{e.g.} adjacency, shortest path) and a \emph{weight vector} $\vh$ on the nodes encoding the nodes' relative importance. GW has proven to be useful for  tasks such as graph matching and partitioning \cite{xu2019scalable,chowdhury2021generalized} or unsupervised graph dictionary learning \cite{xu2020gromov, vincent2021online, vincent-cuaz2022semirelaxed}. GW has been
also extended to directed graphs \cite{chowdhury-gromov-wasserstein-2019} and to
attributed graphs via the Fused Gromov-Wasserstein (FGW) distance
\cite{titouan2019optimal, vayer2020fused}, that realizes a trade-off between an OT distance with a cost on node features and the GW distance between the similarity
matrices. Despite its recent successes on complex unsupervised tasks such as graph clustering \cite{xu2020gromov, vincent2021online, vincent-cuaz2022semirelaxed}, FGW has never been explored as part of an end-to-end model for graph
classification. In this work, we fill this gap by introducing a novel ``layer'' that embeds an attributed graph into a vector, whose coordinates are FGW distances to few (learned) graph templates.  While FGW can be performed directly on raw data (\textit{i.e.} the input structured graph without any pre-processing), we also consider the case where features representations are learnt from a GNN, similarly to OT-GNN \cite{chen2020optimal}, and thus also realizing a particular type of aggregation.

\begin{figure}[t!]
    \begin{center}
    \includegraphics[width=0.9\linewidth]{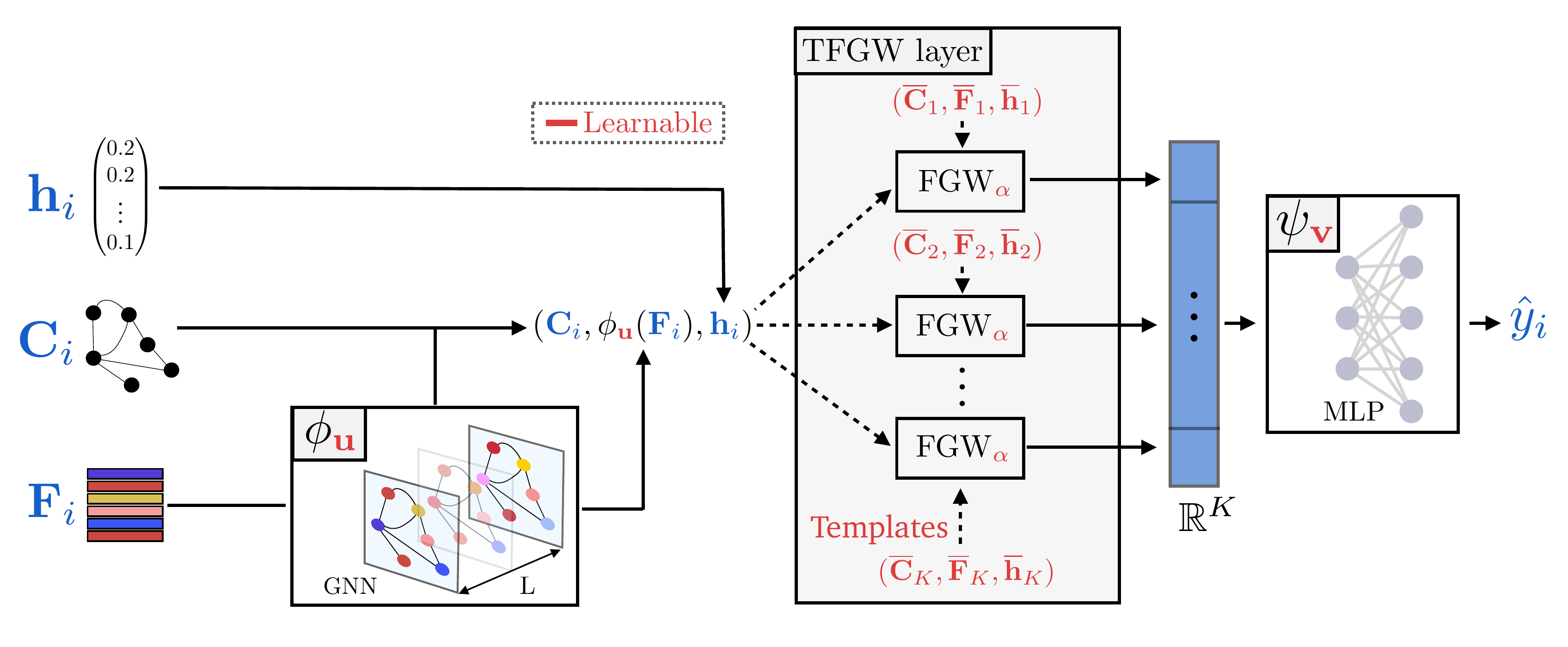}
    \end{center}
    \vspace{-3mm}
    \caption{\label{fig:schema}Illustration of the proposed model. \textbf{(left)} The input graph is represented as
    a triplet $(\mC_i,\mF_i,\vh_i)$ where the matrix $\mC_i$ encodes the structure, $\mF_i$ the features, $\vh_i$ the nodes' weights. A GNN $ \phi_{\vu}$ is applied to the raw features in order
    to extract a meaningful node representations. \textbf{(center)} The
    TFGW layer is applied to the filtered graph and provides a vector representation as FGW distances to templates. \textbf{(right)} a final MLP $\psi_{\vv}$ is
    applied to this vector in order to predict the final output
    of the model. All objects in \textcolor{myred}{red}  are parameters that are
    learned from the data. \vspace{-4mm}}
\end{figure}

\paragraph{Contributions.} We introduce a new GNN layer, named \textbf{TFGW} for
\textbf{Template-based FGW} and illustrated in the center of Figure \ref{fig:schema}. From
an input graph, it computes a vector of FGW distances to learnable graph
templates. This layer that can
be seen as an alternative to global pooling layers and can be
integrated into any neural network architecture.
We discuss its properties and the associated invariances. We detail the
optimization strategy that enables learning simultaneously GNN pre-processing
layers and graph
templates relevant for a downstream task in an end-to-end fashion. We 
empirically demonstrate the relevance of our model in terms of performances
compared to several state-of-the-art architectures. Remarkably, we show
that a simple GNN model leveraging on our new layer can surpass state-of-the-art performances by a relatively large margin. Finally, we also provide some
illustrative interpretations of our method and a sensitivity analysis of our model parameters.
	\label{sec:intro}
	
	\section{Fused Gromov-Wasserstein template based layer}

In order to describe our novel template-based GNN layer we first introduce more formally the FGW distance and its properties. In the following we denote by $\Sigma_n := \{ \vh \in \sR_+^n | \sum_{i} h_i = 1 \}$ the probability simplex
with $n$-bins, and by $\sS_n(\sA)$ the set of symmetric matrices of size $n$
taking values in $\sA \subset \R$.

\subsection{Fused-Gromov Wasserstein distance \label{subsec:FGW}} 

An undirected attributed graph $\mathcal{G}$ with $n$ nodes can be modeled in
the OT context as a tuple $(\mC, \mF, \vh)$, where $\mC \in \sS_n(\sR)$ is a
matrix encoding relationships between nodes, $\mF = (\vf_1,..., \vf_n)^\top \in
\R^{n \times d}$ is a node feature matrix and $\vh \in \Sigma_n$ is a vector of
weights modeling the relative importance of the nodes within the graph (Figure
\ref{fig:schema}, left). Without any prior knowledge, uniform weights can be chosen
($\vh = \bm{1}_{n} / n$). The matrix $\mC$ can be the graph adjacency matrix,
the shortest-path matrix or any other description of the node relationships
(i.e. the topology) of the graph \cite{peyre2016gromov, vayer2020fused,
chowdhury2021generalized}. Let us now consider two such graphs $(\mC, \mF, \vh)$
and $(\overline{\mC}, \overline{\mF}, \overline{\vh})$, of respective sizes $n$
and $\overline{n}$ (with possibly $n \neq \overline{n}$). The Fused
Gromov-Wasserstein ($\FGW$) distance is defined for $\alpha \in [0, 1]$ as
\cite{vayer2020fused, titouan2019optimal}:
\begin{equation} \label{eq:FGW}
	\FGW_{\alpha}(\mC, \mF, \vh, \overline{\mC}, \overline{\mF}, \overline{\vh}) = \min_{\mT \in \mathcal{U}(\vh, \overline{\vh})} \sum_{ijkl} \left( \alpha (C_{ij} - \overline{C}_{kl})^2 + (1 -\alpha) \| \vf_i - \overline{\vf}_k \|_2^2 \right) T_{ik} T_{jl}
\end{equation}
where $\mathcal{U}(\vh, \overline{\vh}) := \{\mT \in \R_+^{n \times
\overline{n}} | \mT \bm{1}_{\overline{n}} = \vh, \mT^\top \bm{1}_n =
\overline{\vh}\}$ is the set of admissible coupling between $\vh$ and
$\overline{\vh}$. $\FGW$ aims at finding an optimal coupling $\mT^{\star}$ by minimizing a trade-off cost, via $\alpha$, between a Wasserstein (W)
cost on the features and a Gromov-Wasserstein (GW) cost on the similarity matrices, both sharing the same coupling. The
optimal coupling $\mT^{\star}$ acts as a soft matching of the nodes, which tends to
associate pairs of nodes that have similar pairwise relations in $\mC$ and
$\overline{\mC}$ (GW cost), and similar features in $\mF$ and $\overline{\mF}$
(W cost). 

Interestingly,  $\FGW$ defines a metric on the space of attributed graphs. In particular, if $\mC$
and $\overline{\mC}$ are shortest-path matrices, the $\FGW$ distance vanishes if and only if the two attributed graphs are the same up to a permutation \cite[Theorem 3.2]{titouan2019optimal}. 
Such invariance involves that two graphs \emph{strongly} isomorphic according to
Weisfeiler-Lehman base tests \cite{Leman2018THERO, Togninalli19} will have a zero $\FGW$ distance for any $\alpha$ and, more importantly, $\FGW_{\alpha} = 0$ implies that the graphs are strongly isomorphic\footnote{{Two graphs $(\mC, \mF, \vh)$ and $(\overline{\mC}, \overline{\mF}, \overline{\vh})$ are strongly isomoprhic if $n = \overline{n}$ and there exists a permutation matrix $\mathbf{P} \in \{0,1\}^{n \times n}$ such that $\overline{\mC} = \mathbf{P} \mC \mathbf{P}^{\top}, \overline{\mF} = \mathbf{P} \mF$ and $\overline{\vh} = \mathbf{P} \vh$}}.
When $\mC, \overline{\mC}$ are any symmetric matrices, we can mention
that GW ($\alpha = 1$) also defines a pseudo-distance \cite[Theorem
5.8]{sturm2012space} with respect to the notion of \emph{weak} isomorphism \cite{sturm2012space, chowdhury2021generalized}. 


\paragraph{Solving for FGW.}
The optimization problem \ref{eq:FGW} is a
non-convex quadratic program \cite[equation 6]{titouan2019optimal}, whose
non-convexity comes from the GW cost. A possible optimization procedure to solve this
problem is a Conditional Gradient (CG) algorithm, which is known to converge to a
local optimum \cite{lacoste-julien-convergence-2016}. The computational complexity of each iteration is $O(n^2 \overline{n} + \overline{n}^2n)$ \cite{peyre2016gromov}. Thus, if two graphs of considerably different sizes are considered, the complexity is quadratic  with respect to the largest size. Existing attempts to reduce
this computational cost either exploit entropic regularization of OT
\cite{peyre2016gromov,scetbon2021lineartime} or graph partitioning \cite{xu2019scalable,chowdhury2021quantized}.

\subsection{Template-based (T)FGW Graph Neural Networks \label{subsec:TFGW}} 

Building upon the $\FGW$ distance and its properties, we propose a simple layer for a GNN that takes a graph $\left(\mC, \mF, \vh \right)$ as
input and computes its $\FGW$ distances to a list of $K$ \emph{template graphs}
$\overline{\mathcal{G}}:=\{(\overline{\mC}_k, \overline{\mF}_k, \overline{\vh}_k)\}_{k \in \integ{K}}$
as follows :
\begin{equation}
	\TFGW_{\overline{\mathcal{G}},\alpha}\left(\mC, \mF, \vh \right) := \left[ \FGW_{\alpha}(\mC, \mF, \vh,  \overline{\mC}_k, \overline{\mF}_k, \overline{\vh}_k) \right]_{k=1}^K 
	\label{eq:tfgw}
\end{equation}
We postulate that this graph representation can be discriminant between the observed graphs due to $\FGW$. This claim relies on the theory of \cite{balcan2008theory} allowing one to learn provably strongly
discriminant classifiers based on the distances from the observed graphs and templates that are sampled from the dataset 
(see e.g. \cite{rakotomamonjy2018distance}  adopting the Wasserstein distance). However such an approach often requires a large
amount of templates which might be prohibitive if the distance is costly to
compute. Instead, we propose to {\bf learn} the graph templates $\overline{\mathcal{G}}$ in a supervised manner. In the same way, we also learn the trade-off parameter $\alpha$ on the data. As such, the $\TFGW$ layer can
automatically adapt to the data whose
discriminating information can be discovered either in the features or in the structure of the graphs, or in a combination of the two. Moreover, the template structures can leverage on any type of input representation $\mC_i$ since they are learnt directly from the data. 
Indeed, in the numerical experiments we implemented the model using either adjacency matrices (ADJ) that provide more interpretable templates (component $C_{i,j}\in[0,1]$ can be seen as a probability of link between nodes) or shortest path matrices (SP) that are more complex to interpret but encode global relations between the nodes.

The $\TFGW$ layer can be used directly as a first layer to build a graph representation feeding a fully connected network (MLP) 
for \textit{e.g.} graphs classication. In order to enhance the discriminating power of the model, we propose to put a GNN (denoted by $\phi_{\vu}$ and parametrized by $\vu$) on top of the $\TFGW$ layer. We assume in the remainder that this GNN model $\phi_{\vu}$ is injective in order to preserve isomorphism relations between graphs (see \cite{xu2018powerful} for more details). With a slight abuse of notation, we write $\phi_{\vu}(\mathbf{F})$ to denote the feature matrix of an observed graph after being processed by the GNN.



\paragraph{Learning with TFGW-GNN.} We focus on a classification
task where we observe a dataset $\mathcal{D}$ of $I$ graphs
$\{\mathcal{G}_i = ( \mC_i, \mF_i, \vh_i)\}_{i \in \integ{I}}$ with variable
number of nodes $\{n_i\}_{i \in \integ{I}}$ and where each graph is assigned to a label $y_i \in \mathcal{Y}$, with
$\mathcal{Y}$ a finite set. The full model is illustrated in
Figure \ref{fig:schema}. We first process the features of the nodes of the input graphs via the GNN
$\phi_{\vu}$, then use the $\TFGW$ layer to represent the graphs as
vectors in $\mathbb{R}^K$. Finally we use the final MLP model $\psi_{\vv} : \mathbb{R}^K
\rightarrow \mathcal{Y}$ parameterized by $\vv$, to predict the label for any input graph. 
The whole model is learned in a end-to-end fashion by minimizing the cross-entropy loss on the whole
dataset leading to the following optimization problem :
\begin{equation}\label{eq:model}
	\min_{ \vu, \vv, \{ (\overline{\mC}_k, \overline{\mF}_k, \overline{\vh}_k)\},\alpha} \quad\frac{1}{I}\sum_{i =1}^I \mathcal{L}\left(y_i, \psi_{\vv}\left(\TFGW_{\overline{\mathcal{G}},\alpha}\left(\mC_i, \phi_{\vu}(\mF_i), \vh_i \right) \right) \right).
\end{equation}

Notable parameters of \eqref{eq:model} are the template graphs in the embeddings $ \{
(\overline{\mC}_k, \overline{\mF}_k, \overline{\vh}_k)\}$ and more precisely
their pairwise node relationship $\overline{\mC}_k$, node features
$\overline{\mF}_k$ and the distribution on the nodes
$\overline{\vh}_k\in\Delta_K$ on the simplex. The last parameter
reweighs individual nodes in each template and performs nodes selection
when some weights are exactly $0$ \cite{vincent2021online, vincent-cuaz2022semirelaxed}. Finally, the global parameter $\alpha$
is also learnt from the whole dataset.
Although it is possible to learn a different $\alpha$ per template, we observed that this extra level of flexibility is prone to overfitting, and we will not consider it in the experimental section.


\paragraph{Optimization and differentiation of TFGW.} 
We propose to solve the optimization problem in \eqref{eq:model} using stochastic
gradient descent. The $\FGW$ distances are
computed by adapting the conditional gradient solver implemented in the POT
toolbox \cite{flamary2021pot}.
The solver was designed to allow backward propagation of the gradients \textit{w.r.t.}
all the parameters of the distance and was adapted to also compute the gradient
\emph{w.r.t.} the parameter $\alpha$. The gradients are obtained using the
Envelop Theorem \cite{afriat1971theory} allowing to keep $\mT^\star$ constant.
We used Pytorch~\cite{paszke2017automatic} to implement the model. 
The template structure $\overline{\mC}_k$, node weights $\overline{\vh}_k$ and $\alpha$ are updated with a projected gradient respectively on the set of 
symmetric matrices $\sS_{\overline{n}_k}(\R_+)$ ($\sS_{\overline{n}_k}([0,1])$
when $\mC_i$ are adjacency matrices), the simplex $\Delta_K$ and $[0,1]$.
The projection onto the probability simplex of the node weights leads to sparse solutions \cite{condat2016fast}, therefore
the size of each $(\overline{\mC}_k, \overline{\mF}_k, \overline{\vh}_k)$ can decrease along iterations hence reducing the \emph{effective} 
number of their parameters to optimize. This way the numerical solver can leverage on the fact that many computations are unnecessary as soon as the weights are set to zero. Note that the FGW solver from POT uses an OT
solver implemented in C++ on CPU which means that it comes with some overhead (memory
transfer between GPU and CPU) when training the model on GPU. Still the multiple FGW distances computation has been
implemented in parallel on CPU with a computational time that remains reasonable in
practice (see experimental section \ref{sec:experiments}). While a GPU solver can be found when using entropy regularized FGW, it introduces a new parameter related to the regularization strength which is more cumbersome to set, and that we did not consider it in the experiments.


\paragraph{Properties of the TFGW layer.} We now discuss a property of the proposed layer 
resulting from the properties of $\FGW$ (see Section \ref{subsec:FGW}). We have the following result:
\begin{lemma}
	The $\TFGW$ embeddings are invariant to strong isomorphism.
\end{lemma}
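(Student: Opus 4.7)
The plan is to prove the lemma directly from the variational definition of $\FGW_\alpha$ in \eqref{eq:FGW}: a permutation relating two strongly isomorphic graphs induces a bijection on the admissible coupling sets that leaves the $\FGW$ objective unchanged, hence preserves the minimum. Fix any template $(\overline{\mC}, \overline{\mF}, \overline{\vh})$ (dropping the template index $k$ to avoid clashing with the coupling indices) with $\overline{n}$ nodes, and suppose $(\mC', \mF', \vh') = (\mathbf{P}\mC\mathbf{P}^\top,\, \mathbf{P}\mF,\, \mathbf{P}\vh)$ for some permutation matrix $\mathbf{P} \in \{0,1\}^{n\times n}$. Since the $\TFGW$ embedding is the concatenation of $K$ such $\FGW$ values, it is enough to prove $\FGW_\alpha(\mC,\mF,\vh,\overline{\mC},\overline{\mF},\overline{\vh}) = \FGW_\alpha(\mC',\mF',\vh',\overline{\mC},\overline{\mF},\overline{\vh})$ for each fixed template.

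First I would construct the bijection $\Phi : \mathcal{U}(\vh,\overline{\vh}) \to \mathcal{U}(\vh',\overline{\vh})$ by $\Phi(\mT) = \mathbf{P}\mT$. The marginal constraints check out: $(\mathbf{P}\mT)\mathbf{1}_{\overline{n}} = \mathbf{P}\vh = \vh'$ and $(\mathbf{P}\mT)^\top \mathbf{1}_{n} = \mT^\top \mathbf{P}^\top \mathbf{1}_{n} = \mT^\top \mathbf{1}_{n} = \overline{\vh}$, using that permutation matrices stabilise the all-ones vector. The assignment $\mT' \mapsto \mathbf{P}^\top \mT'$ is a two-sided inverse, so $\Phi$ is a bijection between the coupling polytopes.

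Next I would show the $\FGW$ cost is invariant under $\Phi$. Writing $\pi$ for the permutation of $\integ{n}$ encoded by $\mathbf{P}$, the identities $T'_{ik} = T_{\pi^{-1}(i),k}$, $C'_{ij} = C_{\pi^{-1}(i)\,\pi^{-1}(j)}$ and $\vf'_i = \vf_{\pi^{-1}(i)}$ hold. The quadratic structural term $\sum_{ijkl}(C'_{ij} - \overline{C}_{kl})^2 T'_{ik} T'_{jl}$ coincides with the unprimed one after the joint change of summation variables $(i,j) \leftarrow (\pi^{-1}(i),\pi^{-1}(j))$. The feature term collapses to $\sum_{ik}\|\vf'_i - \overline{\vf}_k\|_2^2\, T'_{ik}$ since $\mT'$ has unit total mass, and is handled by the same substitution $i \leftarrow \pi^{-1}(i)$. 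Hence the objective evaluated at $\Phi(\mT)$ equals the objective at $\mT$.

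Combining the bijectivity of $\Phi$ with the pointwise invariance of the objective, the two minimisation problems have equal values, which yields the claimed equality of $\FGW$ distances template by template and therefore of the entire $\TFGW$ embedding. The only mildly delicate point is the index bookkeeping in the quadratic structural term; everything else is mechanical. An alternative would be to invoke the fact (Theorem~3.2 of \cite{titouan2019optimal}) that $\FGW_\alpha$ vanishes on strongly isomorphic pairs together with the triangle inequality, but the bijection argument above is shorter and self-contained, and does not rely on the full (pseudo)metric structure of $\FGW_\alpha$.
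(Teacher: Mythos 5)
Your proof is correct and follows essentially the same route as the paper's: both rest on the observation that $\mT \mapsto \mathbf{P}\mT$ maps $\mathcal{U}(\vh,\overline{\vh})$ onto $\mathcal{U}(\mathbf{P}\vh,\overline{\vh})$ while leaving the FGW objective unchanged, and then conclude that the minima coincide (you via the explicit bijection, the paper via a short optimality-contradiction argument, which is only a packaging difference). The paper additionally records that the node-wise GNN map $\phi_{\vu}$ satisfies $\phi(\mathbf{P}\mF)=\mathbf{P}\phi(\mF)$, so the invariance also holds for the embedded features actually fed to the TFGW layer; this one-line remark is worth adding but is not a gap in your argument for the statement as written.
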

This lemma directly stems from the fact that $\FGW$ is invariant to strong isomorphism of one of its inputs. This proposition implies that two graphs with any aforementioned representation which only differ by a permutation of the nodes will share the same $\TFGW$ embedding. 
Moreover such a property holds for any mapping $\phi_u$ which is
injective, such as a Multi-Layer Perceptron (MLP)
\cite{hornik1989multilayer} or any GNN with a sum aggregation scheme as described in \cite{xu2018powerful}. 

Moreover, the optimal coupling $\mT^{\star}$ resulting from $\TFGW$ between $(\mC_i,\phi_{\vu}(\mF_i), \vh_i)$ and the template $(\overline{\mC}_k,\overline{\mF}_k, \overline{\vh}_k)$, will encode correspondances between the nodes of the graph and the nodes of the template that will be propagated during the backward operation. The size of the inputs, the size of the templates and their respective weight $\overline{\vh}_k$ will play a crucial role regarding this operation.
Also note that, since the templates are estimated here to optimize a supervised task, they will promote discriminant distance embedding instead of graph reconstruction quality as proposed in other $\FGW$ unsupervised learning methods
\cite{vincent2021online, vincent-cuaz2022semirelaxed}.  

	\label{sec:model}
	
	\section{Numerical experiments}
	

This section aims at illustrating the performances of our approach for graph classification in synthetic and real-world datasets. First, we showcase the relevance of our $\TFGW$ layer on existing synthetic datasets known to require expressiveness beyond the WL-test (Section \ref{sec:beyondWL}). Then we benchmark our model with state-of-the-art approaches on well-known real-world datasets (Section \ref{sec:classif}). We finally discuss our results through a sensitivity analysis of our models (Section \ref{sec:sensiti}).

\subsection{Synthetic datasets beyong WL test \label{sec:beyondWL}}
\begin{wraptable}{r}{0.4\textwidth} 
	\vspace{-5mm}
	\caption{Average accuracy on synthetic datasets (10 simulations). } \label{tab:toy}
	\begin{center}
		\scalebox{0.8}{
			\begin{tabular}{|c|c|c|c|c|}
				\hline
				model & {4-CYCLES}& {SKIP-CIRCLES} \\ \hline
				TFGW &  \textbf{0.99(0.03)} &  \textbf{1.00(0.00)} \\
				TFGW-fix &  0.63(0.11) &  \textbf{1.00(0.00)}\\ \hline
				GIN &  0.50(0.00)&  0.10(0.00)\\
				DropGIN &   \textbf{1.00(0.01)} &  0.82(0.28)\\ \hline
		\end{tabular}}
	\end{center}
\end{wraptable}

Identification of graphs beyond the WL test is one important
challenge faced by the GNN community. In order to test the ability of $\TFGW$ to handle
such fundamentally difficult problems we consider two synthetic datasets:
4-CYCLES \cite{loukas2019graph, papp2021dropgnn} contains graphs with
(possibly) disconnected cycles where the label $y_i$ is the presence of a cycle of
length 4; SKIP-CIRCLES \cite{chen2019equivalence} contains circular graphs with
skip links and the labels (10 classes) are the lengths of the skip links among $\{2, 3, 4, 5, 6,
9, 11, 12, 13, 16\}$. 

We compare the performances of the $\TFGW$ layer for embedding such graphs with GIN \cite{xu2018powerful} designed to be at least as expressive
as the WL test, and DropGIN \cite{papp2021dropgnn} which proposed a successful dropout technique to overcome some drawbacks of GINs.
We replicate the benchmark
of \cite{papp2021dropgnn} by considering 
for both GIN and DropGIN, 4 GIN layers for 4-CYCLES, and 9 GIN
layers for SKIP-CIRCLES as the skip links can form cycles of up to 17 hops. Since the graphs do not have features we use
directly the $\TFGW$ on the raw graph representation with $\alpha=1$ hence computing
only the GW distance.\begin{wrapfigure}{r}{0.25\textwidth}  \vspace{-5mm}
	\begin{center}
		\includegraphics[width=0.25\textwidth]{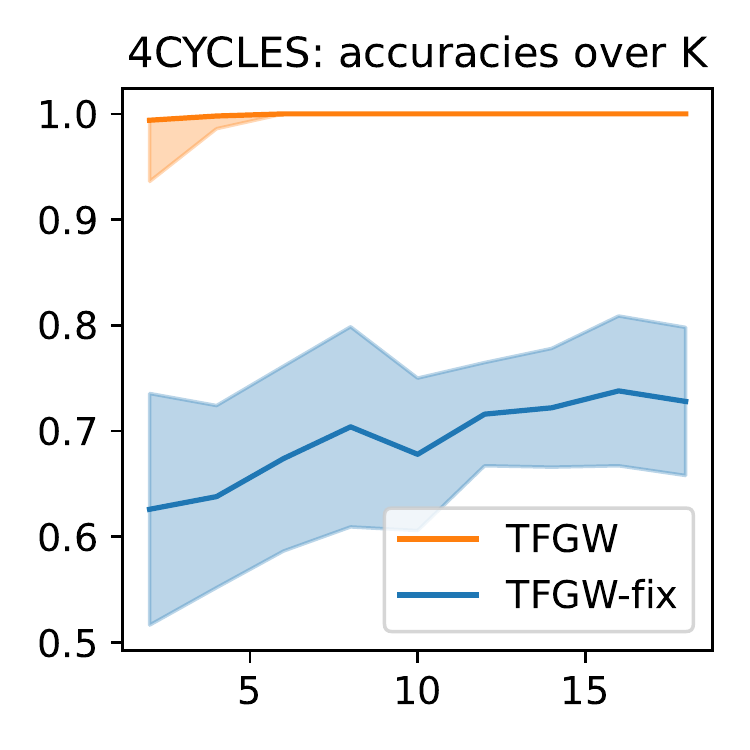}
	\end{center}
	\vspace{-4mm}
	\caption{\label{fig:toy}Test accuracy distributions by number of templates either fixed or learned.} 
\end{wrapfigure} The GNN methods above artificially adds a feature equal to $1$ on all nodes as they have the same degree. For these experiments we use adjacency
matrices for $\mC_i$ and we investigate two flavours of $\TFGW$: 1) in $\TFGW$-fix we fix the
templates by sampling \emph{one template per class} from the training dataset (this can be seen as a simpler FGW feature extraction);
2) for $\TFGW$ we learn the templates from the training
data (as many as the number of classes) as proposed in the previous sections. Results are averaged over 10 runs and reported in
Table \ref{tab:toy}. $\TFGW$ based methods perform very well on both datasets with
impressive results on SKIP-CIRCLE when GNN have limited performances. This is due to the fact that different samples from one class of SKIP-CIRCLE are generated by permuting nodes of the same graph and FGW distances are invariant to these permutations.
4-CYCLES has a more complex structure with intra-class
heterogeneity and requires more than two templates to perform as good as DropGIN. To illustrate this we have computed the accuracy on this dataset as a
function of the number of templates $K$ in Figure \ref{fig:toy}. We can see that a perfect
classification is reached up to $K=4$ for $\TFGW$, while $\TFGW$-fix still
struggles to generalize at $K=20$. This illustrates that
learning the templates is essential to keep $K$ (and numerical complexity) small
while ensuring good performances. 


\subsection{Graph classification benchmmark \label{sec:classif}}
\begin{table}[t!]
	\caption{\label{tab:benchmark}Test set classification accuracies from 10-fold CV. The first (resp. second) best performing method is highlighted in bold (resp. underlined). (*) results on the validation sets from 10-fold CV of original paper.}
	\begin{center}
		\scalebox{0.7}{
			\begin{tabular}{|c|c||c|c|c|c|c||c|c|c|}
				\hline
				category & model & MUTAG & PTC & ENZYMES & PROTEIN & NCI1& IMDB-B & IMDB-M & COLLAB\\ \hline\hline
				Ours & TFGW ADJ (L=2)  & \textbf{96.4(3.3)} & \textbf{72.4(5.7)} & \underline{73.8}(\underline{4.6}) & \textbf{82.9(2.7)}  & \textbf{88.1(2.5)} & \textbf{78.3(3.7)}  & \textbf{56.8(3.1)} &\textbf{84.3(2.6)} \\
				& TFGW SP (L=2)  & \underline{94.8}(\underline{3.5}) & \underline{70.8}(\underline{6.3})& \textbf{75.1(5.0)} & \underline{82.0}(\underline{3.0}) & \underline{86.1}(\underline{2.7}) &  \underline{74.1}(\underline{5.4})  & \underline{54.9}(\underline{3.9}) &  80.9(3.1) \\ \hline
				OT emb. & OT-GNN (L=2)& 91.6(4.6)& 68.0(7.5)& 66.9(3.8) & 76.6(4.0)& 82.9(2.1)& 67.5(3.5) & 52.1(3.0)& 80.7(2.9)\\ 
				& OT-GNN (L=4)& 92.1(3.7)& 65.4(9.6)& 67.3(4.3)& 78.0(5.1) & 83.6(2.5) & 69.1(4.4)& 51.9(2.8) & 81.1(2.5)\\
				&WEGL & 91.0(3.4)& 66.0(2.4)& 60.0(2.8)& 73.7(1.9) &75.5(1.4) & 66.4(2.1) & 50.3(1.0) &79.6(0.5) \\
				
				\hline
				GNN & PATCHYSAN & 91.6(4.6) & 58.9(3.7) & 55.9(4.5) & 75.1(3.3) & 76.9(2.3) & 62.9(3.9)& 45.9(2.5)& 73.1(2.7) \\
				& GIN & 90.1(4.4) & 63.1(3.9) & 62.2(3.6)&76.2(2.8) & 82.2(0.8) & 64.3(3.1)& 50.9(1.7)& 79.3(1.7)\\
				& DropGIN & 89.8(6.2) & 62.3(6.8) & 65.8(2.7)&76.9(4.3) & 81.9(2.5) & 66.3(4.5) &  51.6(3.2)& 80.1(2.8) \\
				& PPGN* & 90.6(8.7) & 66.2(6.5) & - & 77.2(4.7) & 83.2(1.1) & 73.0(5.8) & 50.5(3.6) & 81.4(1.4)\\
				& DIFFPOOL & 86.1(2.0) & 45.0(5.2) & 61.0(3.1) & 71.7(1.4) & 80.9(0.7) & 61.1(2.0)& 45.8(1.4) &80.8(1.6) \\ \hline
				Kernels & FGW - ADJ & 82.6(7.2) & 55.3(8.0)& 72.2(4.0) & 72.4(4.7) & 74.4(2.1)&  70.8(3.6) & 48.9(3.9)& 80.6(1.5) \\
				& FGW - SP & 84.4(7.3) & 55.5(7.0) & 70.5(6.2) & 74.3(3.3) & 72.8(1.5)& 65.0(4.7) & 47.8(3.8)& 77.8(2.4)\\
				& WL  & 87.4(5.4) & 56.0(3.9) & 69.5(3.2)& 74.4(2.6) & 85.6(1.2) &67.5(4.0) & 48.5(4.2)& 78.5(1.7)\\
				& WWL & 86.3(7.9) & 52.6(6.8) & 71.4(5.1) & 73.1(1.4) & 85.7(0.8) & 71.6(3.8) & 52.6(3.0)& \underline{81.4}(\underline{2.1})\\
				\hline \hline
				 & Gain with TFGW & \textbf{4.3} & \textbf{4.4} & \textbf{2.9} & \textbf{4.9}& \textbf{2.4}& \textbf{5.3}& \textbf{4.2} & \textbf{2.9}\\ \hline
		\end{tabular}}
	\end{center}
\end{table}

We now evaluate and compare the performances of our $\TFGW$ GNN with a
number of state-of-the-art  graph classifiers, from kernel methods to GNN. The
numerical experiments are conducted on real life graph datasets to provide a fair benchmark of all methods on several heterogeneous graph structures.

\paragraph{Datasets.} We use 8 well-known graph classification datasets \cite{KKMMN2016}: 5
bioinformatics datasets among which 3  have discrete node features (MUTAG,
PTC, NCI1 \cite{kriege2012subgraph,shervashidze2011weisfeiler}) and 2 have continuous node features (ENZYMES, PROTEINS\cite{borgwardt2005protein}) and 3
social network datasets (COLLAB, IMDB-B, IDBM-M \cite{yanardag-deep-2015}). In order to analyse them with all methods, we augment unattributed graphs from social networks with node degree features. Detailed description and statistics on these datasets are reported in the supplementary material.

\paragraph{Baselines.} We benchmark our approaches to the following
state-of-the-art baselines for graphs classification, split into 3 categories:
i) \emph{kernel based approaches}, including $\FGW$ \cite{titouan2019optimal} operating on
adjacency and shortest-path matrices, the WL subtree kernel
\cite[WL]{shervashidze2011weisfeiler} and the Wasserstein WL kernel
\cite[WWL]{Togninalli19}. For these methods that do not require a stopping criterion
dependent on a validation set, we report results using for parameter validation a 10-fold nested cross-validation
\cite{titouan2019optimal,kriege2020survey} repeated 10 times. 
ii) \emph{OT based representation learning} models, including WEGL
\cite{kolouri2020wasserstein} and OT-GNN \cite{chen2020optimal}. iii) \emph{GNN models},
with global or more sophisticated pooling operations, including PATCHY-SAN
\cite{niepert2016learning}, DIFFPOOL \cite{ying2018hierarchical}, PPGN
\cite{maron2019provably}, GIN \cite{xu2018powerful} and its augmented version
through structure perturbations DropGIN \cite{papp2021dropgnn}.
For all these methods, we adopt the hyper-parameters suggested in the respective  papers, but with a slightly different model selection scheme, as detailed in the next paragraph.


\paragraph{Benchmark settings.} Recent GNN literature \cite{maron2018invariant,
xu2018powerful,maron2019provably,papp2021dropgnn} successfully addressed many
limitations in terms of model expressiveness compared to the WL tests.
Within that scope, they suggested to benchmark their models using a 10-fold
cross-validation (CV) where the best average accuracy on the validation folds was
reported. We suggest here to quantify the generalization capacities of GNN based
models by performing a 10-fold cross validation with a holdout test set never
seen during training. For each split, we track the accuracy on the validation
fold every 5 epochs, then the model whose parameters maximize that
accuracy is retained. Finally, the model used to predict on the holdout test set is the one with maximal validation accuracy averaged across all 
folds. This setting is more realistic than a simple 10-fold
CV and allows a better understanding of the generalization performances \cite{bengio2003no}. This point explains why some existing approaches have here different performances than those reported in their original paper.

For all the $\TFGW$ based approaches we empirically study the impact of the
input structure representation by considering adjacency (ADJ) and
shortest-path (SP) matrices $\mC_i$. For all template based models, we set the size of
the templates to the median size of the observed graphs. 

We validate the number of templates $K$ in $\{\beta |\mathcal{Y}|\}_\beta$, with $\beta \in \{2,4,6,8\}$ and $|\mathcal{Y}|$ the number of classes. Only for ENZYMES with 6
classes of 100 graphs each, we validate $\beta \in \{1, 2, 3,
4\}$.
All parameters of our $\TFGW$ layers highlighted in \textcolor{myred}{red} in Figure
\ref{fig:schema} are learned while $\phi_\vu$ is a GIN architecture
\cite{xu2018powerful} composed of $L=2$ layers aggregated using the Jumping Knowledge scheme \cite{xu2018representation}
known to prevent overfitting in global pooling frameworks. For OT-GNN 
we validate the number of GIN layers in $L \in \{2,4\}$. 
 Finally for fairness, we validate the number
of hidden units within the
GNN layers and the application of dropout on the final MLP for predictions,
similarly to GIN and DropGIN.


\paragraph{Results analysis.} The results of the comparisons in terms of accuracy
are reported in Table \ref{tab:benchmark}. Our $\TFGW$ approach consistently
\emph{outperforms with significant margins} the state-of-the-art approaches from all
categories. Even if most of the benchmarked models can perfectly fit the train sets by learning implicitly the graphs structure \cite{xu2018powerful,papp2021dropgnn,maron2019provably}, enforcing such knowledge explicitly as our $\TFGW$ layer does (through $\FGW$ distances) leads to considerably stronger generalization performances. On 7 out of 8 datasets, $\TFGW$ leads to better performances while operating on adjacency matrices (TFGW ADJ) than on shortest-path ones (TFGW SP).
Interestingly, this ranking with respect to those input representations does not
necessarily match the one of the FGW kernel which extracts knowledge from the
graph structures $\mC_i$ through $\FGW$, as our $\TFGW$ layer. These different
dependencies to the provided inputs may be due to the GNN pre-processing of node
features which suggests the study of its ablation. \begin{wraptable}{r}{0.4\textwidth} 
	\vspace{-5mm}
	\caption{Number of parameters and averaged prediction time per graph. } \label{tab:runtimes}
	\begin{center}
		\scalebox{0.8}{
			\begin{tabular}{|c|cc|}
				\hline
				model       & \multicolumn{2}{c|}{PTC}                        \\ \hline
				& \multicolumn{1}{c|}{parameters} & runtimes (ms) \\ \hline
				(ours) TFGW & \multicolumn{1}{c|}{25.1k}      & 12.1          \\ \hline
				OT-GNN      & \multicolumn{1}{c|}{30.8k}      & 7.6           \\ \hline
				GIN         & \multicolumn{1}{c|}{29.9k}      & 0.19          \\ \hline
				DropGIN     & \multicolumn{1}{c|}{44.1k}      & 14.3          \\ \hline
		\end{tabular}}
	\end{center}
\end{wraptable}
Finally to complete this analysis, we report in Table \ref{tab:runtimes}
the number of parameters of best selected models across various methods, for the
dataset PTC. Our $\TFGW$ leads to better classification performances while
having comparable number of parameters than these competitors. We also reported
for these models, their averaged prediction time per graph. These measures were
taken on CPUs (Intel Core i9-9900K CPU, 3.60 GHz) in order to fairly compare the
numerical complexity of these methods, as OT solver used in $\TFGW$ and OT-GNN
are currently limited to these devices (see the detailed discussion in Section
\ref{subsec:FGW}). Although the theoretical complexity of our approach is
\emph{at most} cubic in the number of nodes, we still get in practice a fairly
good speed for classifying graphs, in comparison to the other competitive
methods.

\subsection{Ablation study, sensitivity analysis and discussions \label{sec:sensiti}}
In this section we inspect the role of some of the model parameters ($\alpha$ in $\FGW$, weights estimation
in the templates, depth of the GNN $\phi_\vu$) in terms of the classification performance. To this end, we first conduct on all datasets an
ablation study on the graph template weights
$\overline{\vh}_k$ and the number of GIN layers in $\phi_\vu$. 
Then, we take a closer look at the
estimated  trade-off
parameters $\alpha$ and provide a sensitivity analysis \emph{w.r.t.} the number of templates and the number of GIN layers.

\paragraph{Ablation Study.} Following the same procedure as in Section \ref{sec:classif}, we benchmark the following settings for
our $\TFGW$ models: for adjacency (ADJ) and
shortest path (SP) representations $\mC_i$, we learn the distance layers either directly, on
the raw data (\textit{i.e.} $L=0, \phi_{\mathbf{u}}=\operatorname{id}$), or after embedding the data with $(L=1)$ or $(L=2)$ GIN layers. For $L=0$ we
either fix the graph template weights $\overline{\vh}_k$ uniformly or learn them.
The results (test accuracy) are reported in Table \ref{tab:ablation}.
Learning the weights systematically improves the generalization capabilities of our
models of at least $1\%$ or $2\%$ for both ADJ and SP graph representations. For a given number of graph templates, the weights learning allows to better fit the specificities of the classes (e.g. varying proportion of nodes in different parts of the graphs). 
Moreover, as weights can become sparse in the simplex during training they also allow the model to have templates whose number of nodes adapts to the classification objective, while bringing computational benefits as discussed in Section \ref{subsec:TFGW}.
Those observations explain why we learnt them by default in the benchmark of the previous subsection.

Next, we see in Table \ref{tab:ablation} that using GNN layers as a
pre-processing for our $\TFGW$ layer enhances generalization powers of our models,
whose best performances are obtained for $L=2$. Interestingly, for $L=0$,
$\TFGW$ with SP
matrices outperforms $\TFGW$ with ADJ matrices, meaning that the shortest path
distance brings more discriminant information on raw data. But when
$L\geq 1$ (\textit{i.e.} when a GNN pre-processes the node features), $\TFGW$ with ADJ matrices improves the accuracy. 
An explanation could be that the GNN $\phi_{\vu}$ can somehow replicate (and  outperform) a SP metric between nodes. This emphasizes that the strength of our approach clearly exhibited in Table \ref{tab:benchmark} lies in the inductive bias of our FGW distance embedding.


\begin{table}[t!]
	\caption{\label{tab:ablation}Classification results from 10-fold cross-validation of our $\TFGW$ models in various scenarios: for $L \in \{0,1,2\}$ GIN layers, we either fix templates weights $\overline{\vh}_k$ to uniform distributions or learn them. The first and second best performing method are respectively highlighted in bold and underlined. }
	\begin{center}
		\scalebox{0.7}{
			\begin{tabular}{|c|c|c|c|c|c|c|c|c|c|c|}
				\hline
				model & inputs & $\overline{\vh}_k$ & MUTAG & PTC & ENZYMES & PROTEIN & NCI1& IMDB-B & IMDB-M & COLLAB\\ \hline
				TFGW (L=0)& ADJ & uniform &92.1(4.5) & 63.6(5.0) & 67.4(7.3) &  78.0(2.0) & 80.3(1.5) & 69.9(2.5) &49.7(4.1) & 78.7(3.1)  \\
				& ADJ &learnt & 94.2(3.0) & 64.9(4.1) & 72.1(5.5) & 78.8(2.2)& 82.1(2.5) & 71.3(4.3) & 52.3(2.5) &  80.9(2.7)\\
				&SP &uniform&  94.8(3.7)  & 66.5(6.7) & 72.7(6.9) &77.5(2.4)  & 79.6(3.7) & 68.1(4.4) & 48.3(3.6)& 78.4(3.4)\\
				& SP &learnt& \underline{95.9}(\underline{4.1}) & 67.9(5.8)& \textbf{75.1(5.6)} & 79.5(2.9)& 83.9(2.0)  &72.6(3.1) & 53.1(2.5) & 79.8(2.5)\\
				\hline
				TFGW(L=1)& ADJ & learnt & 94.8(3.1) & 68.7(5.8)& 72.7(5.1) &81.5(2.8) & 85.4(2.8) & \underline{76.3}(\underline{4.3}) & \underline{55.9}(\underline{2.4}) & \underline{82.6(1.8)} \\ 
				& SP & learnt & 95.4(3.5) & \underline{70.9}(\underline{5.5})& \underline{74.9}(\underline{4.8}) & \underline{82.1}(\underline{3.4})& 85.7(3.1) & 73.8(4.8)& 54.2(3.3) & 81.1(2.5)\\ \hline
				TFGW (L=2) &ADJ & learnt  & \textbf{96.4(3.3)} & \textbf{72.4(5.7)} & 73.8(4.6) & \textbf{82.9(2.7)}  & \textbf{88.1(2.5)} & \textbf{78.3(3.7)}  & \textbf{56.8(3.1)} & \textbf{84.3(2.6)} \\
				& SP & learnt&94.8(3.5) & 70.8(6.3)& \textbf{75.1(5.0)} & 82.0(3.0) & \underline{86.1}(\underline{2.7}) &  74.1(5.4)  & 54.9(3.9) & 80.9(3.1) \\ \hline
		\end{tabular}}
	\end{center}
\end{table}
\begin{figure}[t!]
	\begin{center}
		\includegraphics[height=4cm]{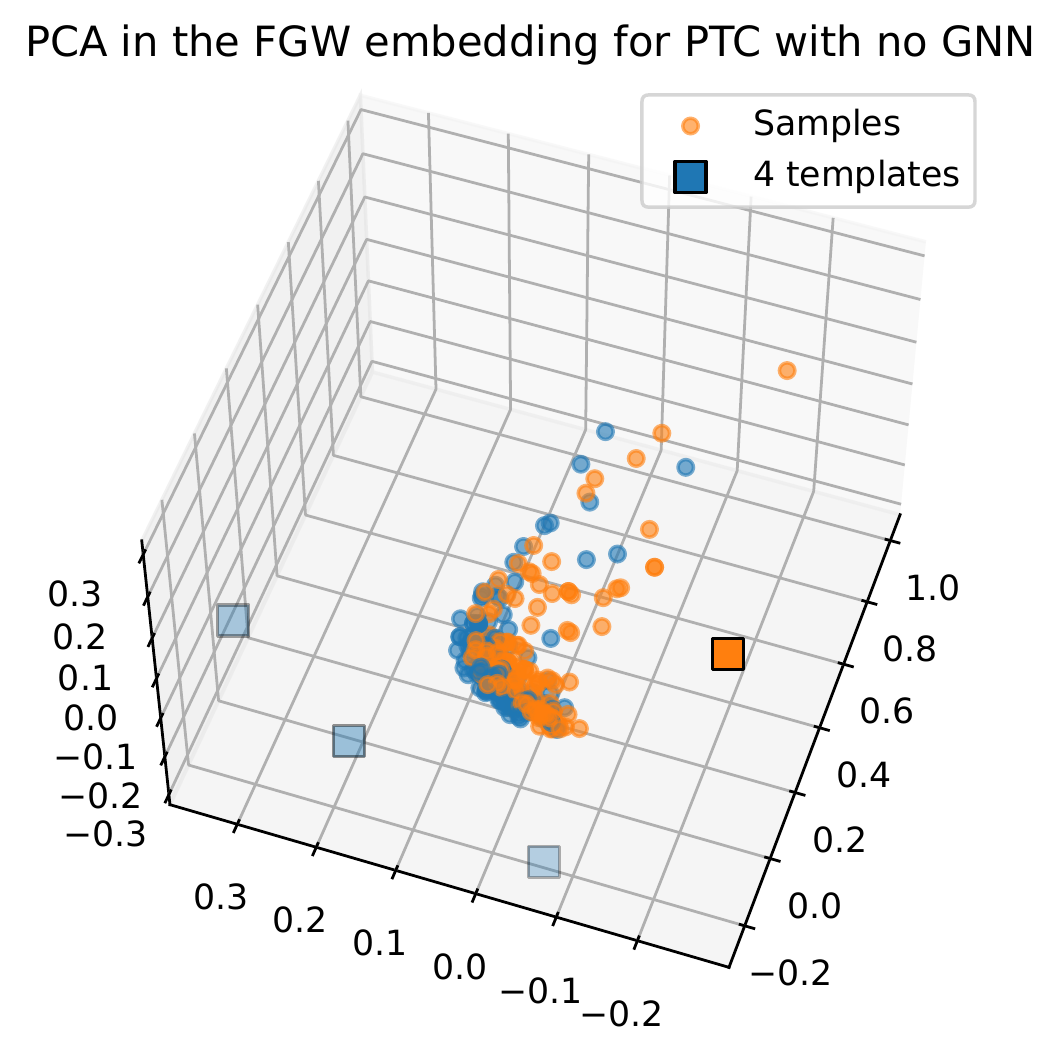}\hspace{5mm}
		\includegraphics[height=4cm]{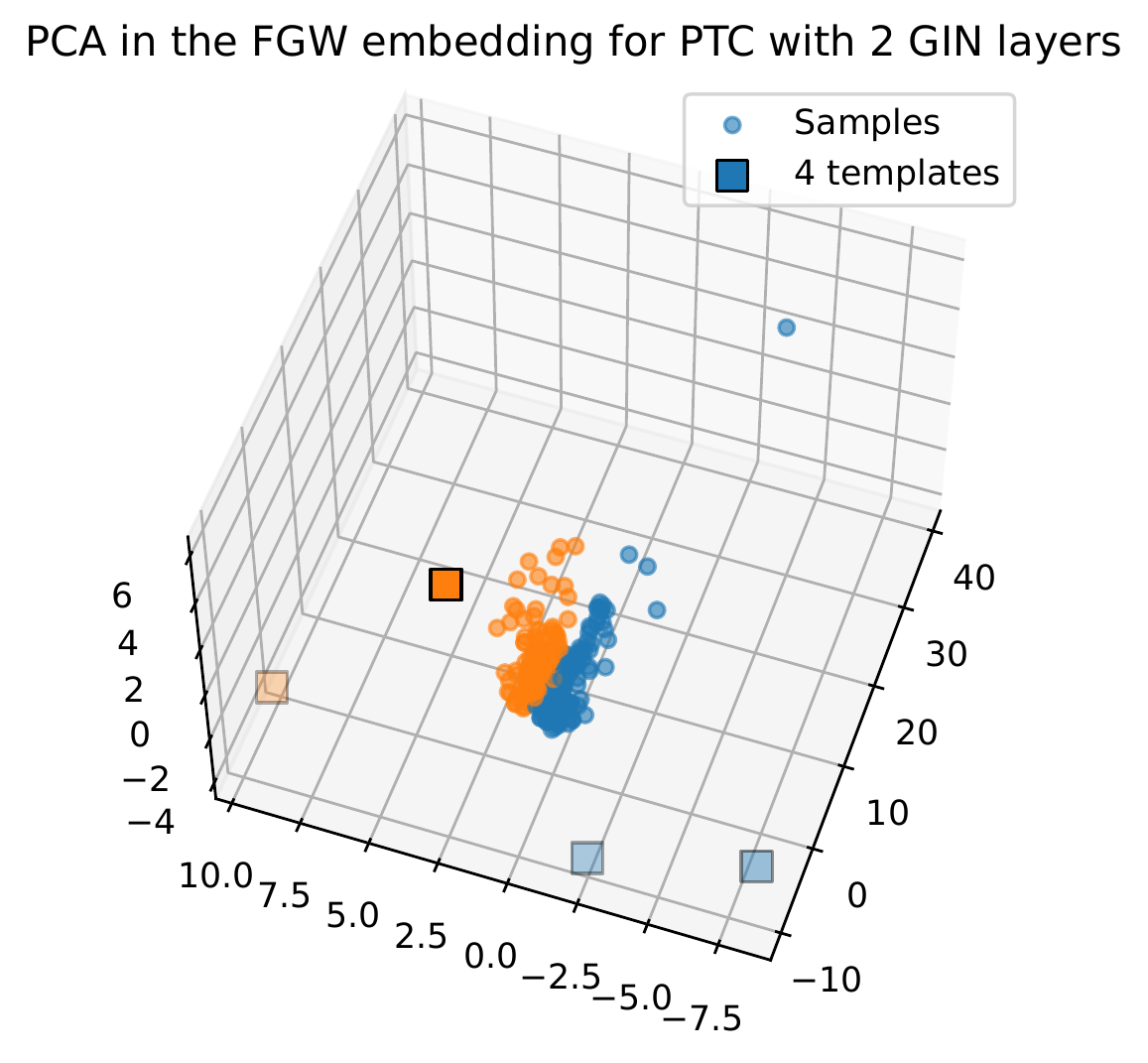}\hspace{5mm}
		\includegraphics[height=4cm]{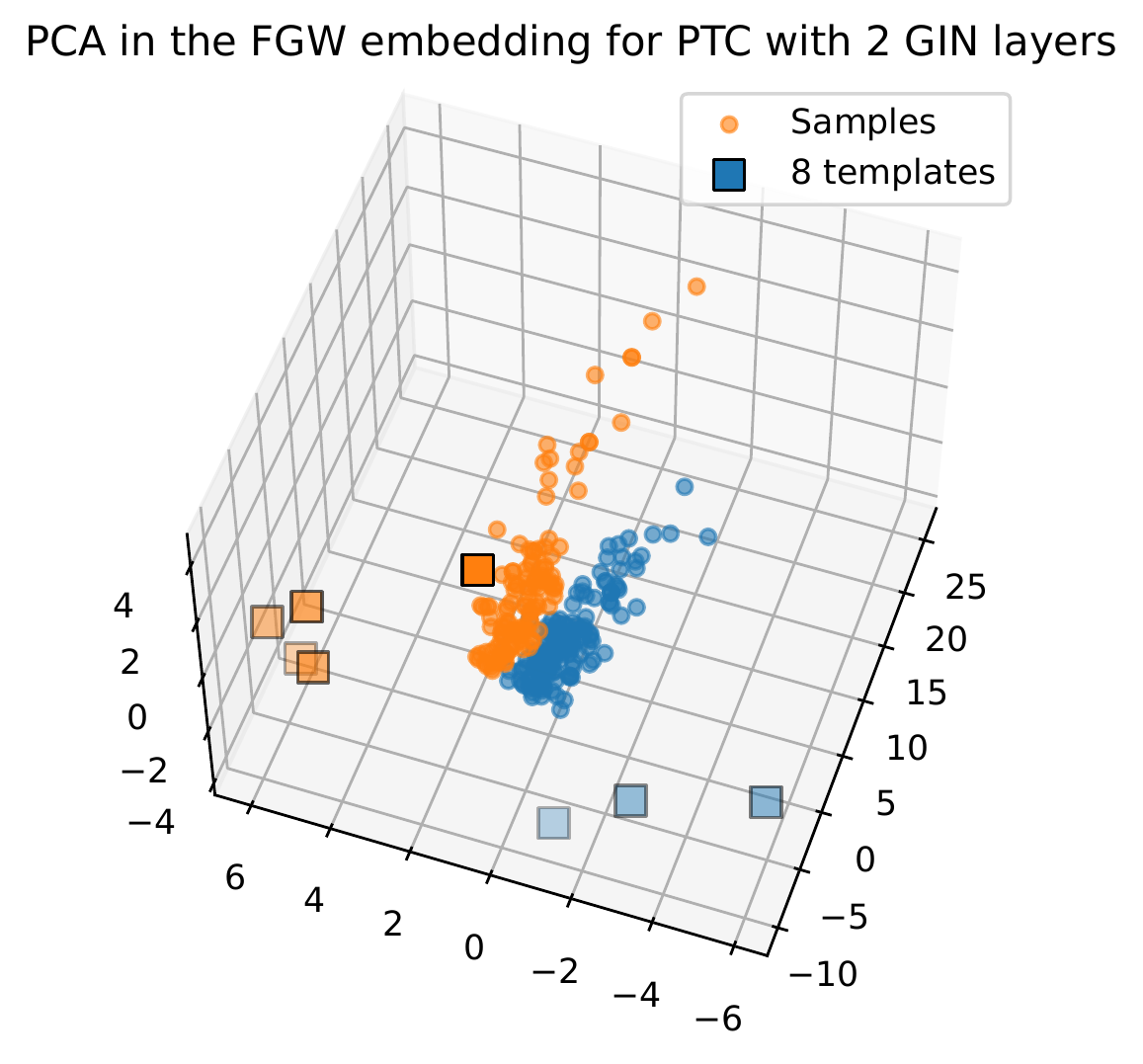}
	\end{center}\vspace{-2mm}
	\caption{PCA projections of the template based embeddings for different
		models and number of templates.(need to debug colors)\label{fig:pca} }
\end{figure}
\paragraph{Importance of the structure/feature aspect of FGW.} \begin{wrapfigure}{r}{0.5\textwidth}  \vspace{-5mm}
	\begin{center}
		\includegraphics[height=0.18\textwidth]{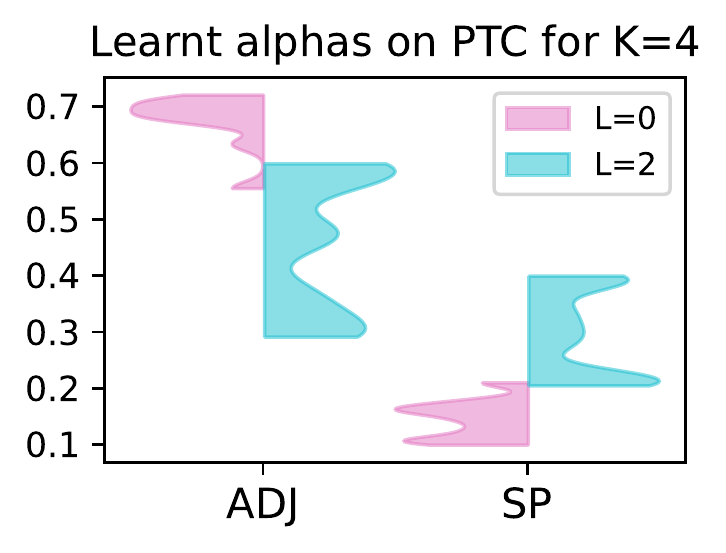}
		\includegraphics[height=0.18\textwidth]{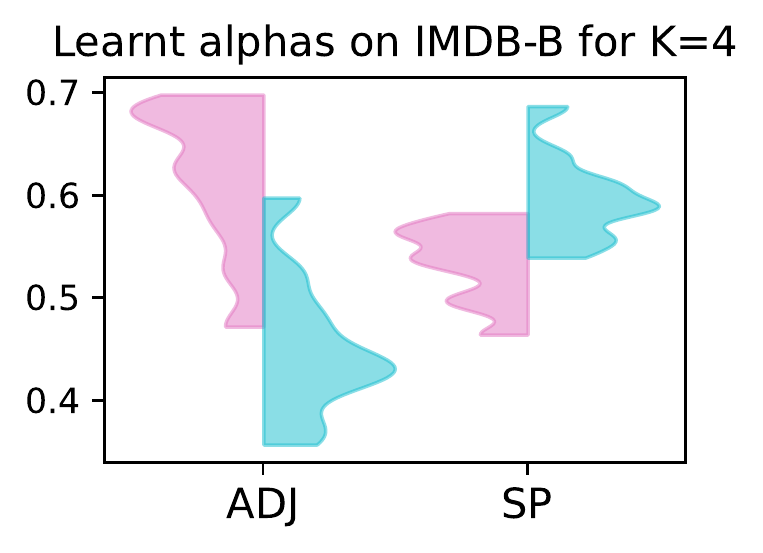}
		
	\end{center}
	\vspace{-5mm}
	\caption{\label{fig:alphas}Distributions of estimated $\alpha$.} 
\end{wrapfigure}
To the best of our knowledge we are the first to actually learn the
trade-off parameter $\alpha$ of $\FGW$ in a supervised way. For this matter, we
verify that our models did not converge to degenerated solutions where either
the structure ($\alpha =0$) or the features ($\alpha=1$ ) are omitted. To
this end we report in Figure \ref{fig:alphas} the distributions of the estimated
$\alpha$ for some models learnt on datasets PTC and IMDB-B, where features are respectively existing in the dataset or created using node degrees. We can see that for both 
kinds of input graph representations,
$\alpha$ parameters are strictly between $0$ and $1$. One can notice the variances of those
distributions illustrating the non-uniqueness of this trade-off parameter coming
from the non-convexity of our optimization problem (a given value of $\alpha$
can potentially be compensated by the scaling of the GNN output). Unfortunately, the
analysis of the real relative importance between structures and features can not
be achieved only by looking at those values as the node embeddings and templates
are different across models and data splits.

\paragraph{Visualizing the TFGW embedding.}{
In order to interpret the $\TFGW$ embedding, we illustrate in Figure
\ref{fig:pca} the PCA projection of our distance embeddings learned on
PTC with $L=0$ and $L=2$ and the number of templates $K$ varying in $\{4,8\}$. {For this experiment, we have chosen the PCA because it allows to have 
a more interpretable low dimensional projection that preserves the geometry compared to local 
neighbourhood based embeddings such as TSNE \cite{van2008visualizing} or UMAP
\cite{mcinnes2018umap}. As depicted in the figure, the learned templates are extreme
points in the embedding space of the PCA. This result is particularly interesting because existing unsupervised $\FGW$ representation learning methods
tend toward estimating templates that belong to the data manifold, or to form a
‘‘convex enveloppe'' of the data to ensure good reconstruction \cite{vincent2021online,xu2020gromov}. On the contrary, the templates learned through our approach seem to be located on a plane in the PCA space while the samples evolve orthogonally to this plane (when the FGW distance increases).
In a classification context, this means that the learned templates will not actually represent realistic graphs from the data
but might encode ‘‘exaggerated'' or ‘‘extreme'' features in order to maximize the margin
between classes in the embedding. To reinforce this intuition, we added plots of the
estimated templates in the supplementary. Finally, in the figure, the samples are coloured \textit{w.r.t.} their class and the templates are coloured by
their predicted class. Interestingly, the classes are already well separated with $4$ templates but the separation is clearly
non-linear whereas using GNN pre-processing and a larger number of templates leads to a linear separation of the two classes.}

\paragraph{Sensitivity to the number of templates and GNN layers.}
\begin{wrapfigure}{r}{0.6\textwidth}  \vspace{-6mm}
	\begin{center}
		\includegraphics[width=0.6\textwidth]{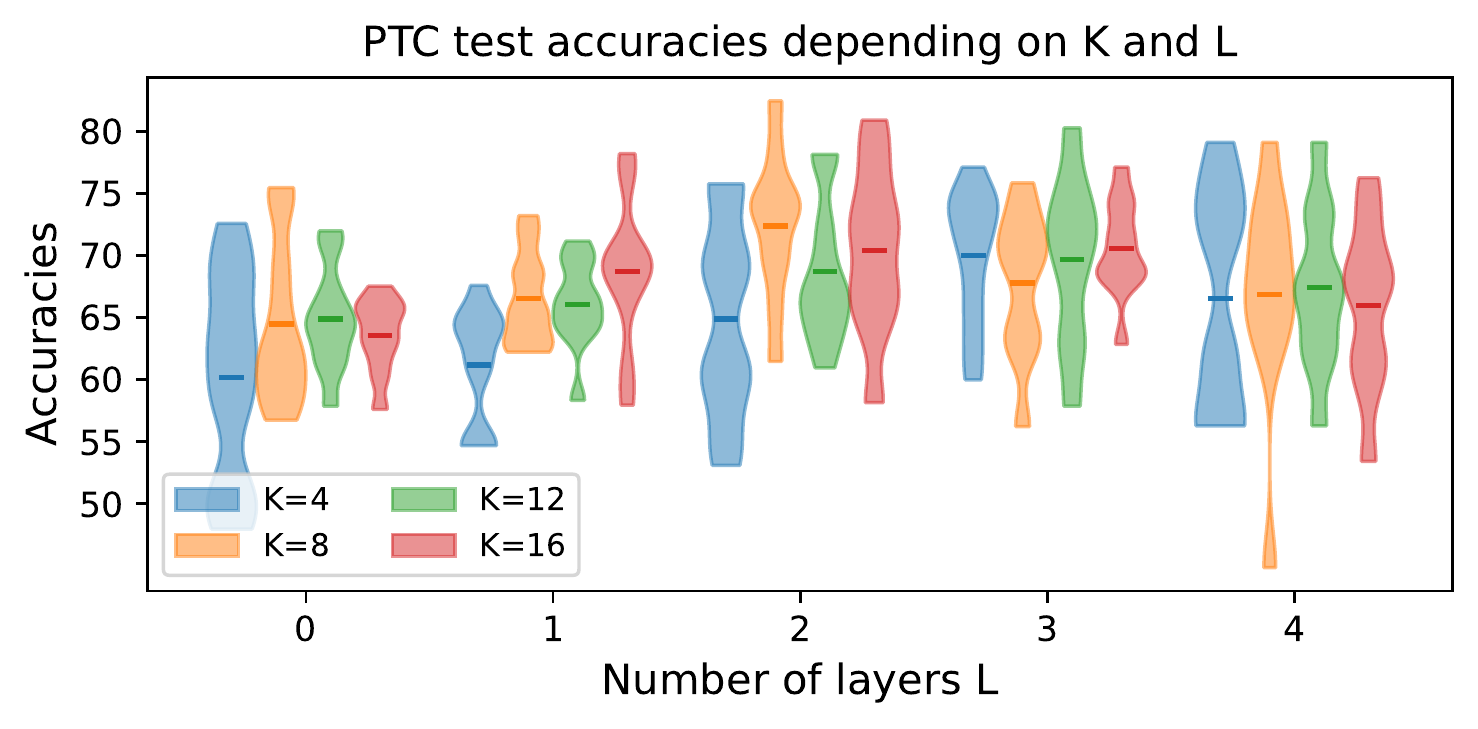}
	\end{center}
	\caption{\label{fig:KvsL}Test accuracy distributions by number of templates and number of GNN layers.\vspace{-2mm}}
\end{wrapfigure}
 To illustrate the sensitivity of our $\TFGW$ layer to the number of templates K
 and the number of GNN layers L in $\phi_{\mathbf{u}}$, we learned our models on the PTC dataset with $L$ varying in $\{0,1,2,3,4\}$. We follow the same
 procedure than in the benchmark of Section \ref{sec:classif} regarding the
 validation of $K$ and the learning process, while fixing the number of hidden
 units in the GNN layers to $16$. The test accuracy distributions for all settings
 are reported in Figure \ref{fig:KvsL}. Two phases are clearly distinguishable.
 The first one for $L \leq 2$, where for each $L$ we see that the performance
 across the number of templates steadily increases, and the second for $L>2$
 where this performance progressively decreases as a function of $L$. Moreover, in the first phase
 performances are considerably dependent on $K$ to compensate for a simple node representation, while this dependency is
 mitigated in the second which exhibits a slight overfitting. Note that these deeper
 models still lead to competitive results in comparison with benchmarked
 approaches in Table \ref{tab:benchmark}, with best averaged accuracies of
 $70.6$ $(L=3)$ and $67.4$ $(L=4)$. On one hand, these observations led us to
 set our number of layers to $L=2$ for all benchmarked datasets which lead to strong generalization power. On the other hand, deeper models might be a way to
 benefit from our $\FGW$ embeddings with very few templates which can be
 interesting from a computational perspective on larger graph datasets.
	\label{sec:experiments}
	
	\section{Conclusion}


We have introduced a new GNN layer whose goal is to represent a graph by its distances to template graphs, according to the optimal transport metric $\FGW$. The proposed layer can be used directly on raw graph data as the first layer of a GNN or can also
benefit from more involved node embedding using classical GNN layers. In a graph classification context, we combined this $\TFGW$ layer with a simple MLP model. We demonstrated on several benchmark datasets that this approach compared favorably with
state-of-the-art GNN and kernel based classifiers. A sensitivity analysis and an ablation study were presented to justify the
choice of several parameters explaining the good generalization performances.  

We believe that the new
way to represent complex structured data provided by $\TFGW$ will open the door to novel and
hopefully more interpretable GNN architectures. From a practical perspective, future works will be dedicated to combine $\TFGW$ with fast GPU solvers for network flow \cite{shekhovtsov2013distributed}. This would greatly accelerate our approach and more generally OT based deep learning methods. We also believe that the $\FGW$ distance and its existing extensions can be used with other learning
strategies including semi-relaxed $\FGW$ \cite{vincent-cuaz2022semirelaxed} for sub-graph detection.

	\label{sec:conclusion}
	
\section*{Acknowledgments}
This work is partially funded through the projects OATMIL ANR-17-CE23-0012, OTTOPIA ANR-20-CHIA-0030 
and 3IA C\^{o}te d'Azur Investments ANR-19-P3IA-0002 of the French National Research
Agency (ANR). This research was produced within the framework of Energy4Climate
Interdisciplinary Center (E4C) of IP Paris and Ecole des Ponts ParisTech. This
research was supported by 3rd Programme d'Investissements d'Avenir
ANR-18-EUR-0006-02. This action benefited from the support of the Chair
"Challenging Technology for Responsible Energy" led by l'X – Ecole polytechnique
and the Fondation de l'Ecole polytechnique, sponsored by TOTAL. This work is supported by the ACADEMICS grant of the IDEXLYON, project of the Université de Lyon, PIA operated by ANR-16-IDEX-0005.
The authors are grateful to the OPAL infrastructure from Universit\'{e} C\^{o}te d'Azur for providing resources and support.

	\bibliography{FGWM_citations}
	
	
	\section{Supplementary material}
	
\subsection{Notations}\label{subsec:notations}

An undirected attributed graph $\mathcal{G}$ with $n$ nodes can be modeled in
the OT context as a tuple $(\mC, \mF, \vh)$, where $\mC \in \sS_n(\sR)$ is a
matrix encoding relationships between nodes, $\mF = (\vf_1,..., \vf_n)^\top \in
\R^{n \times d}$ is a node feature matrix and $\vh \in \Sigma_n$ is a vector of
weights modeling the relative importance of the nodes within the graph (Figure
1 of the main paper). \emph{We always assume in the following that values in $\mC$ and $\mF$ are finite}. Let us now consider two such graphs $\mathcal{G}=(\mC, \mF, \vh)$
and $\overline{\mathcal{G}}=(\overline{\mC}, \overline{\mF}, \overline{\vh})$, of respective sizes $n$
and $\overline{n}$ (with possibly $n \neq \overline{n}$). The Fused
Gromov-Wasserstein ($\FGW$) distance is defined for $\alpha \in [0, 1]$ as
\cite{vayer2020fused, titouan2019optimal}:
\begin{equation} \label{eq:FGW_supp}
\FGW_{\alpha}(\mC, \mF, \vh, \overline{\mC}, \overline{\mF}, \overline{\vh}) = \min_{\mT \in \mathcal{U}(\vh, \overline{\vh})} \mathcal{E}^{FGW}_{\alpha}(\mC, \mF, \overline{\mC}, \overline{\mF}, \mT)  
\end{equation}
with $\mathcal{U}(\vh, \overline{\vh}) := \{\mT \in \R_+^{n \times
	\overline{n}} | \mT \bm{1}_{\overline{n}} = \vh, \mT^\top \bm{1}_n =
\overline{\vh}\}$, the set of admissible coupling between $\vh$ and
$\overline{\vh}$. For any $\mT \in \mathcal{U}(\vh, \overline{\vh})$, the FGW cost $\mathcal{E}^{FGW}_\alpha$ can be decomposed as 
\begin{equation}\label{eq:FGWcost_supp1}
\mathcal{E}^{FGW}_{\alpha}(\mC, \mF, \overline{\mC}, \overline{\mF}, \mT)  =	\alpha \mathcal{E}^{GW}(\mC, \overline{\mC}, \mT) + (1-\alpha) \mathcal{E}^{W}(\mF, \overline{\mF},\mT)
\end{equation}
which respectively refers to a Gromov-Wasserstein matching cost $\mathcal{E}^{GW}$ between graph structures $\mC$ and $\overline{\mC}$ reading as
\begin{equation}\label{eq:GWcost_supp1}
\mathcal{E}^{GW}(\mC, \overline{\mC}, \mT) = \sum_{ijkl} (C_{ij} - \overline{C}_{kl})^2  T_{ik} T_{jl}
\end{equation}
and a Wasserstein matching cost $\mathcal{E}^{W}$ between nodes features $\mF$ and $\overline{\mF}$,
\begin{equation}\label{eq:Wcost_supp1}
\mathcal{E}^{W}(\mF, \overline{\mF}, \mT) = \sum_{ik} \| \vf_i -\overline{\vf}_k \|_2^2 T_{ik}
\end{equation}
\subsection{Theoretical results}

\paragraph{Preliminaries.} 
Given two graphs $\mathcal{G}$ and $\overline{\mathcal{G}}$, we first provide a reformulation of each matching costs $\mathcal{E}^{GW}$ and $\mathcal{E}^{W}$ through matrix operations which will facilitate the readability of our proof. 

By first expanding the GW matching cost given in \ref{eq:GWcost_supp1} and using the marginal constraints over $\mT \in \mathcal{U}(\vh, \overline{\vh})$, $\mathcal{E}^{GW}$ can be expressed as 
\begin{equation}\label{eq:GWcost_supp2}
\begin{split}
\mathcal{E}^{GW}(\mC, \overline{\mC}, \mT) &= \sum_{ij}C_{ij}^2 h_i h_j + \sum_{kl} \overline{C}_{kl}^2 \overline{h}_k \overline{h}_l - 2 \sum_{ijkl} C_{ij}\overline{C}_{kl}T_{ik}T_{jl}  \\
&= \scalar{\mC^2}{\vh\vh^\top} + \scalar{\overline{\mC}^2}{\overline{\vh}\overline{\vh}^\top} - 2 \scalar{\mT^\top \mC \mT}{\overline{\mC}} \\
&= \scalar{\mT^\top \mC^2 \mT}{\bm{1}_{\overline{n} \times \overline{n}}  } + \scalar{\mT \overline{\mC}^2 \mT^\top}{\bm{1}_{n \times n} } - 2 \scalar{\mT^\top \mC \mT}{\overline{\mC}} \}\\
\\ 
\end{split}
\end{equation}
where power operations are applied element-wise and $\bm{1}^{p \times q }$ is the matrix of ones of size $p \times q$ for any integers $p$ and $q$. 

Then through similar operations $\mathcal{E}^{W}$ can be expressed as
\begin{equation}\label{eq:Wcost_supp2}
\begin{split}
\mathcal{E}_{\alpha}(\mC, \mF, \vh, \overline{\mC}, \overline{\mF}, \overline{\vh}, \mT)  & =   \sum_{i} \| \vf_i \|_2^2 h_i  + \sum_k \| \overline{\vf}_k \|_2^2 \overline{h}_k- 2 \sum_{ik}\scalar{\vf_i}{\overline{\vf}_k} T_{ik}   \\
&= \scalar{\mF^2 \bm{1}_d}{\vh} + \scalar{\overline{\mF}^2 \bm{1}_d}{\overline{\vh}} - 2 \scalar{\mF \overline{\mF}^\top}{\mT}\\
&=\scalar{\mT^\top \mF^2 }{ \bm{1}_{\overline{n} \times d}} + \scalar{\mT \overline{\mF}^2 }{ \bm{1}_{n \times d}} - 2 \scalar{\mF^\top \mT}{ \overline{\mF}^\top}\\
\end{split}
\end{equation}
\setcounter{lemma}{0}
\begin{lemma}
	The TFGW embeddings are invariant to strong isomorphism.
\end{lemma}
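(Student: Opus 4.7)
The plan is to reduce the lemma to the single-argument invariance of $\FGW_\alpha$: since each entry of $\TFGW_{\overline{\mathcal{G}},\alpha}(\mC,\mF,\vh)$ is an $\FGW$ distance to a fixed template, showing that $\FGW_\alpha$ is unchanged when its first triplet is replaced by a strongly isomorphic one will immediately give the coordinate-wise, and hence the vectorial, invariance of the embedding.

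To establish this single-argument invariance, let $(\mC',\mF',\vh')$ be strongly isomorphic to $(\mC,\mF,\vh)$ through a permutation matrix $\mathbf{P}\in\{0,1\}^{n\times n}$, so $\mC'=\mathbf{P}\mC\mathbf{P}^\top$, $\mF'=\mathbf{P}\mF$ and $\vh'=\mathbf{P}\vh$. For any template $(\overline{\mC}_k,\overline{\mF}_k,\overline{\vh}_k)$, I would first show that the linear map $\Phi_\mathbf{P}:\mT\mapsto \mathbf{P}\mT$ is a bijection from $\mathcal{U}(\vh,\overline{\vh}_k)$ onto $\mathcal{U}(\vh',\overline{\vh}_k)$. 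Non-negativity and the inverse $\Phi_{\mathbf{P}^\top}$ are immediate; the row-marginal constraint becomes $\mathbf{P}\mT\bm{1}_{\overline{n}_k}=\mathbf{P}\vh=\vh'$, and the column-marginal is preserved because $\mathbf{P}^\top\bm{1}_n=\bm{1}_n$, giving $(\mathbf{P}\mT)^\top\bm{1}_n=\mT^\top\bm{1}_n=\overline{\vh}_k$.

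The second step is to show that the $\FGW$ objective is preserved under $\Phi_\mathbf{P}$. Writing $\pi$ for the permutation associated to $\mathbf{P}$, the identities $C'_{ij}=C_{\pi(i),\pi(j)}$, $\vf'_i=\vf_{\pi(i)}$ and $(\mathbf{P}\mT)_{ik}=T_{\pi(i),k}$ enable the change of variables $i\leftarrow\pi(i)$, $j\leftarrow\pi(j)$, which turns the summand of \eqref{eq:FGW} evaluated at $(\mC',\mF',\Phi_\mathbf{P}(\mT))$ into exactly the corresponding summand at $(\mC,\mF,\mT)$, term by term, for both the GW and the Wasserstein parts. Combining this with the bijection from the previous paragraph, the infima coincide, so $\FGW_\alpha(\mC',\mF',\vh',\overline{\mC}_k,\overline{\mF}_k,\overline{\vh}_k)=\FGW_\alpha(\mC,\mF,\vh,\overline{\mC}_k,\overline{\mF}_k,\overline{\vh}_k)$ for every $k$, which proves the lemma.

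The only subtlety is the index bookkeeping in the four-index GW term (the W term is essentially trivial); this is purely mechanical, since $\Phi_\mathbf{P}$ cleanly decouples the effect of the permutation from the choice of coupling, and no genuine analytic difficulty arises. This invariance is moreover already implicit in the metric/pseudo-metric results of \cite{titouan2019optimal,sturm2012space}, so the argument could be contracted to a short citation if brevity were the priority.
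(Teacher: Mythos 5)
Your proof is correct and follows essentially the same route as the paper's: both show that $\mT\mapsto\mathbf{P}\mT$ maps admissible couplings to admissible couplings while leaving the FGW cost unchanged, and conclude that the optima (hence each coordinate of the TFGW embedding) coincide; your bijection-of-feasible-sets argument replaces the paper's optimality-by-contradiction step, which is a cosmetic difference. The only element of the paper's proof you omit is the easy remark that a nodewise feature map $\phi_{\vu}$ (the GNN pre-processing) also preserves strong isomorphism, which the paper includes so the invariance extends to the full pipeline, but this is not required by the lemma's literal statement.
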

\begin{proof}

First, as our TFGW embeddings can operate after embedding the nodes feature of any graph, let us also introduce such an application. Given any feature matrix $\mF=(\vf_1, ..., \vf_n)^\top \subset \R^{n \times d}$, we denote by $\phi : \R^{n \times d} \rightarrow \R^{n \times d^\prime}$ an application such that $\phi(\mF) = (\varphi(\vf_1), ..., \varphi(\vf_n))^\top$ with $\varphi: \R^{d} \rightarrow \R^{d^\prime}$.

Let us now consider any pair of graphs $\mathcal{G}_1=(\mC_1, \mF_1, \vh_1)$ and $\mathcal{G}_2=(\mC_2, \mF_2, \vh_2)$ defined as in the subsection \ref{subsec:notations}. Assume that $\mathcal{G}_1$ and $\mathcal{G}_2$ are \emph{strongly isomorphic}. This is equivalent to assuming that they have the same number of nodes $n$ and there exists a permutation matrix $\mP \in \{0,1\}^{n \times n}$ such that $\mC_2 = \mP \mC_1 \mP^\top$, $\mF_2 = \mP \mF_1$ and $\vh_2 = \mP \vh_1$ \cite{titouan2019optimal,chowdhury-gromov-wasserstein-2019}.


First observe that the application $\phi$ preserves the relation of strong isomorphism. Indeed, as $\phi$ operates on each node independently through $\varphi$, we have $\phi(\mF_2)= \mP \phi(\mF_1)$ \emph{i.e},
\begin{equation}
\phi(\mF_2) = (\varphi(\mF_{2, 1}), ..., \varphi(\mF_{2, n})) = \mP(\varphi(\mF_{1, 1}), ..., \varphi(\mF_{1, n})) = \mP \phi(\mF_1)
\end{equation}
Therefore the embedded graphs $(\mC_1, \phi(\mF_1), \vh_1)$ and $(\mC_2, \phi(\mF_2), \vh_2)$ are also strongly isomorphic and are associated by the same permutation $\mP$ linking $\mathcal{G}_1$ and $\mathcal{G}_2$.

Let us consider any graph template $\overline{\mathcal{G}}= (\overline{\mC}, \overline{\mF}, \overline{\vh})$. We will prove now that the FGW cost from $(\mC_1, \phi(\mF_1), \vh_1)$ to $\overline{\mathcal{G}}$ applied in $\mT$ is the same than the FGW cost from $(\mC_2, \phi(\mF_2), \vh_2)$ to $\overline{\mathcal{G}}$ applied in $\mP\mT$. To this end we will prove that analog relations hold for the Gromov-Wasserstein and the Wasserstein matching costs independently (in this generic scenario), then we will conclude thanks the equation \ref{eq:FGWcost_supp1} which expresses FGW as a linear combination between both aforementioned costs.

First using the reformulation of $\mathcal{E}^{GW}$ of equation \ref{eq:GWcost_supp2}, we have
\begin{equation}\label{eq:invariant_GW}
\begin{split}
\mathcal{E}^{GW}(\mC_1, \overline{\mC}, \mT) &= \scalar{\mT^\top \mC^2_1 \mT}{\bm{1}_{\overline{n} \times \overline{n}}  } + \scalar{\mT \overline{\mC}^2 \mT^\top}{\bm{1}_{n \times n} } - 2 \scalar{\mT^\top \mC_1 \mT}{\overline{\mC}} \\
&=\scalar{\mT^\top \mP^\top \mC^2_2 \mP \mT}{\bm{1}_{\overline{n} \times \overline{n}}  } + \scalar{ \mT \overline{\mC}^2 \mT^\top}{ \mP^\top \bm{1}_{n \times n} \mP } - 2 \scalar{\mT^\top \mP^\top \mC_2 \mP \mT }{\overline{\mC}} \\
&=\scalar{(\mP\mT)^\top \mC^2_2 \mP \mT}{\bm{1}_{\overline{n} \times \overline{n}}  } + \scalar{ \mP\mT \overline{\mC}^2 (\mP\mT)^\top}{ \bm{1}_{n \times n} } - 2 \scalar{(\mP \mT)^\top \mC_2 \mP \mT }{\overline{\mC}} \\
&=	\mathcal{E}^{GW}(\mC_2, \overline{\mC}, \mP\mT)
\end{split}
\end{equation}

where we used $\mC_1^2 = (\mP^\top\mC_2\mP)^2 = \mP^\top\mC_2^2\mP$ and the invariance to permutations of $\bm{1}_{n \times n}$.

 Then, for $\mathcal{E}^W$ similar operations using equation \ref{eq:Wcost_supp2} and $\phi(\mF_2)^2 = (\mP \phi(\mF_1))^2 = \mP \phi(\mF_1)^2$  lead to,
\begin{equation}\label{eq:invariant_W}
\begin{split}
\mathcal{E}^W(\phi(\mF_1), \overline{\mF}, \mT)&= \scalar{\mT^\top \phi(\mF_1)^2 }{ \bm{1}_{\overline{n} \times d}} + \scalar{\mT \overline{\mF}^2 }{ \bm{1}_{n \times d}} - 2 \scalar{\phi(\mF_1)^\top \mT}{ \overline{\mF}^\top}\\
&= \scalar{\mT^\top \mP^\top \phi(\mF_2)^2 }{ \bm{1}_{\overline{n} \times d}} + \scalar{\mT\overline{\mF}^2 }{ \mP \bm{1}_{n \times d}} - 2 \scalar{\phi(\mF_2)^\top \mP \mT}{ \overline{\mF}^\top}\\
&= \scalar{(\mP\mT)^\top  \phi(\mF_2)^2 }{ \bm{1}_{\overline{n} \times d}} + \scalar{\mP\mT\overline{\mF}^2 }{ \bm{1}_{n \times d}} - 2 \scalar{\phi(\mF_2)^\top \mP \mT}{ \overline{\mF}^\top}\\
& = \mathcal{E}^W(\phi(\mF_2), \overline{\mF}, \mP\mT)\\
\end{split}
\end{equation}
Therefore, the same result holds for $FGW$ using \ref{eq:FGWcost_supp1} and equations \ref{eq:invariant_GW} \ref{eq:invariant_W} as
\begin{equation} \label{eq:invariant_FGW}
\begin{split}
\mathcal{E}^{FGW}_\alpha(\mC_1, \phi(\mF_1), \overline{\mC}, \overline{\mF}, \mT) &= \alpha \mathcal{E}^{GW}(\mC_1,\overline{\mC},  \mT) + (1 -\alpha)\mathcal{E}^{W}(\phi(\mF_1),\overline{\mF},  \mT) \\
&= \alpha \mathcal{E}^{GW}(\mC_2,\overline{\mC},  \mP\mT) + (1 -\alpha)\mathcal{E}^{W}(\phi(\mF_2),\overline{\mF},  \mP\mT) \\
&=\mathcal{E}^{FGW}_\alpha(\mC_2, \phi(\mF_2), \overline{\mC}, \overline{\mF}, \mP\mT)
\end{split}
\end{equation}
Following an analog derivation than above, one can easily prove for $\mT \in \mathcal{U}(\vh_2, \overline{\vh})$ that
\begin{equation} \label{eq:invariant_FGW_2}
\begin{split}
\mathcal{E}^{FGW}_\alpha(\mC_2, \phi(\mF_2), \overline{\mC}, \overline{\mF}, \mT) &= \mathcal{E}^{FGW}_\alpha(\mC_1, \phi(\mF_1), \overline{\mC}, \overline{\mF}, \mP^\top\mT)
\end{split}
\end{equation}

Using the relations \ref{eq:invariant_FGW} and \ref{eq:invariant_FGW_2}, we will now prove the following equality 
\begin{equation}
	\FGW_\alpha(\mC_1, \phi(\mF_1), \vh_1,  \overline{\mC}, \overline{\mF}, \overline{\vh}) = \FGW_\alpha(\mC_2, \phi(\mF_2), \vh_2,  \overline{\mC}, \overline{\mF}, \overline{\vh})
\end{equation}

First of all, the existence of optimal solutions for both FGW problems is ensured by the Weierstrass theorem \cite{santambrogio2015optimal}. We denote an optimal coupling $\mT^\star_1 \in \mathcal{U}(\vh_1, \overline{\vh})$ for $\FGW_\alpha(\mC_1, \phi(\mF_1), \vh_1,  \overline{\mC}, \overline{\mF}, \overline{\vh})$. Assume there exists an optimal coupling $\mT^\star_2$ for $\FGW_\alpha(\mC_2, \phi(\mF_2), \vh_2,  \overline{\mC}, \overline{\mF}, \overline{\vh})$ such that 
\begin{equation}
	\mathcal{E}^{FGW}_\alpha(\mC_2, \phi(\mF_2), \overline{\mC}, \overline{\mF}, \mT^\star_2) < 	\mathcal{E}^{FGW}_\alpha(\mC_2, \phi(\mF_2), \overline{\mC}, \overline{\mF}, \mP\mT^\star_1)
\end{equation}
then using the equalities $\ref{eq:invariant_FGW_2}$ for the l.h.s and $\ref{eq:invariant_FGW}$ for the r.h.s, we have
\begin{equation}
\mathcal{E}^{FGW}_\alpha(\mC_1, \phi(\mF_1), \overline{\mC}, \overline{\mF}, \mP^\top \mT^\star_2) < 	\mathcal{E}^{FGW}_\alpha(\mC_1, \phi(\mF_1), \overline{\mC}, \overline{\mF}, \mT^\star_1)
\end{equation}
which contradicts the optimality of $\mT^\star_1$. Therefore such $\mT^\star_2$ can not exist and necessarily $\mP \mT^\star_1$ is an optimal coupling for $\FGW_\alpha(\mC_2, \phi(\mF_2), \vh_2,  \overline{\mC}, \overline{\mF}, \overline{\vh})$. Finally, we can conclude using the optimality of $\mT^\star_1$ and $\mP \mT^\star_1$ for their respective FGW matching problems and the equality $\ref{eq:invariant_FGW}$:
\begin{equation}
	\begin{split}
		\mathcal{E}^{FGW}_\alpha(\mC_1, \phi(\mF_1), \overline{\mC}, \overline{\mF}, \mT^\star_1) &= \mathcal{E}^{FGW}_\alpha(\mC_2, \phi(\mF_2), \overline{\mC}, \overline{\mF}, \mP\mT^\star_1) \\
		\Leftrightarrow \FGW_\alpha(\mC_1, \phi(\mF_1), \vh_1, \overline{\mC}, \overline{\mF}, \overline{\vh}) &= \FGW_\alpha(\mC_2, \phi(\mF_2), \vh_2, \overline{\mC}, \overline{\mF}, \overline{\vh}) \\
	\end{split}
\end{equation}
\end{proof}

\section{Complements on our experimental results on synthetic datasets}
\begin{figure}[H] 
	\begin{center}
		\includegraphics[height=0.22\linewidth]{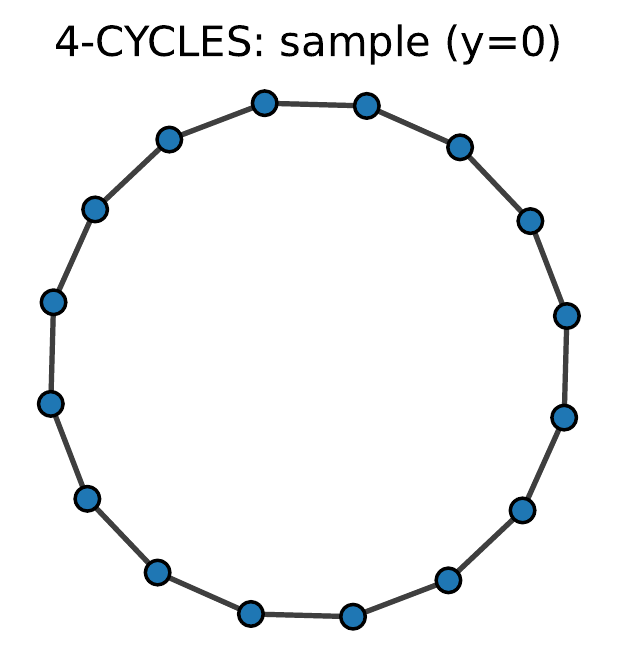} \hfill
		\includegraphics[height=0.22\linewidth]{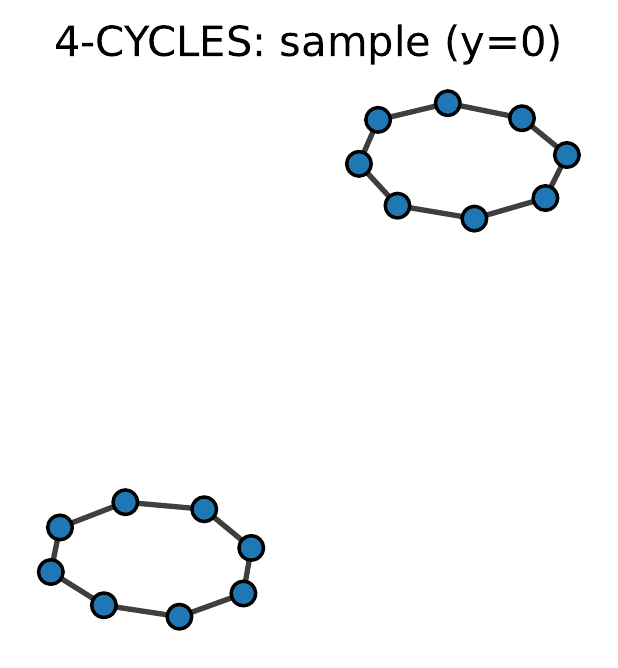}\hfill
		\includegraphics[height=0.22\linewidth]{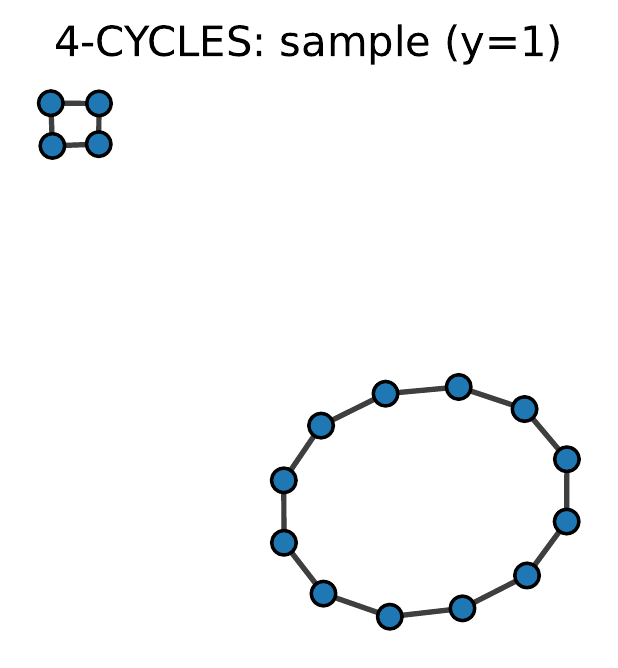}\hfill
		\includegraphics[height=0.22\linewidth]{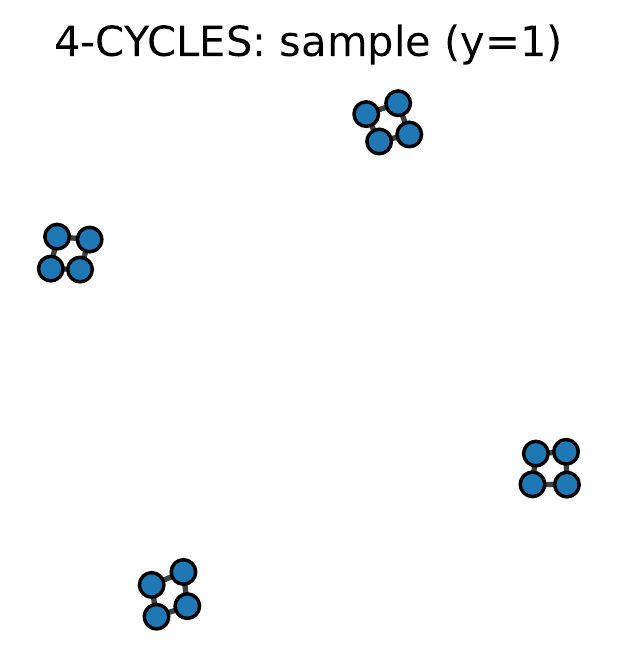}
	\end{center}
	\caption{\label{fig:4cycles_samples}Few samples with different labels $y \in \{0,1\}$ from the dataset 4-CYCLES.}
\end{figure}
\begin{figure}[H] 
	\begin{center}
		\includegraphics[height=0.22\linewidth]{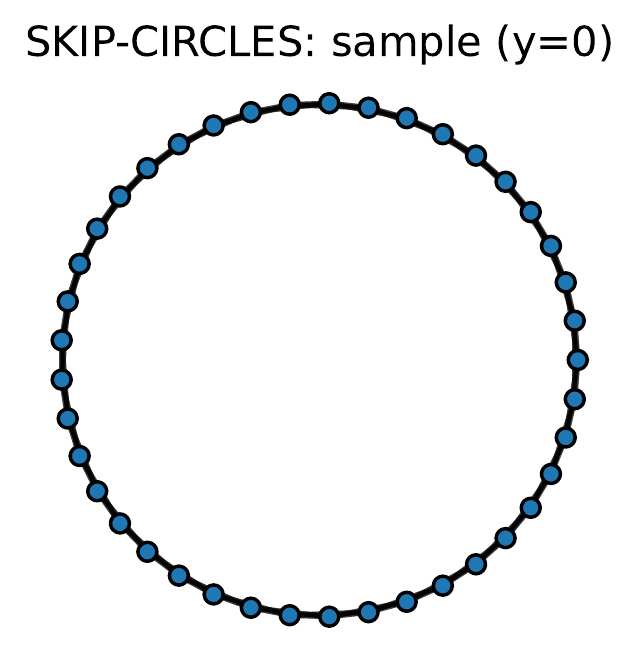} \hfill
		\includegraphics[height=0.22\linewidth]{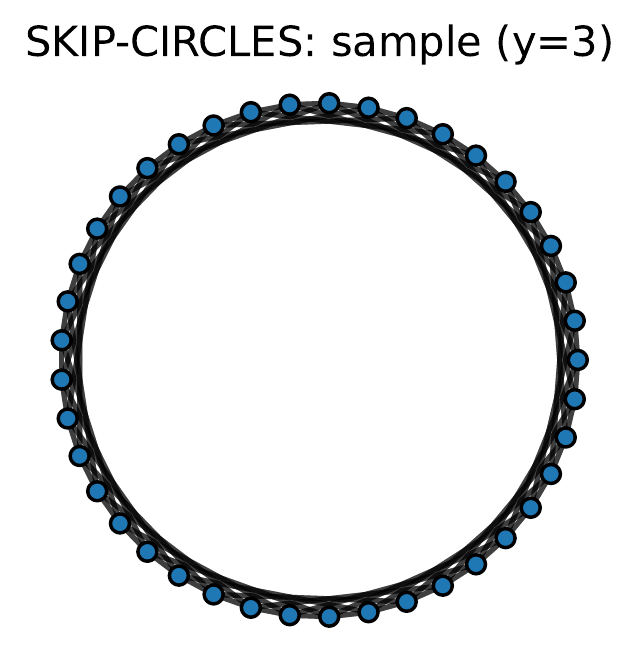} \hfill
		\includegraphics[height=0.22\linewidth]{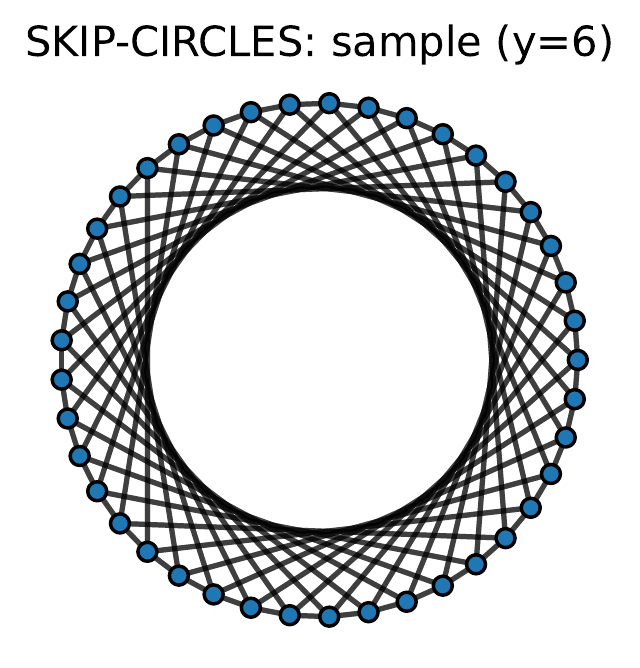} \hfill
		\includegraphics[height=0.22\linewidth]{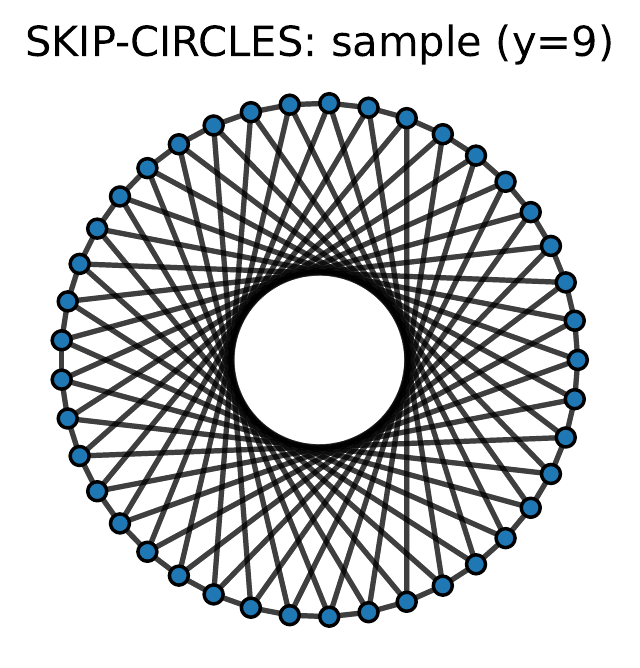} \hfill
	\end{center}
	\caption{\label{fig:skipcircles_samples}Unique sample from different labels $y \in \{0,3,6,9\}$ corresponding respectively to $\{2,5,11,16\}$ hops from the dataset SKIP-CIRCLES.}
\end{figure}
We provide here some insights and results on the synthetic datasets studied in subsection \ref{sec:beyondWL} of the main paper.
\paragraph{Datasets.} 
We considered two synthetic datasets:
\begin{itemize}
	\item 4-CYCLES \cite{loukas2019graph, papp2021dropgnn} contains graphs with (possibly) disconnected cycles where the label $y_i$ is the presence of a cycle of length 4, as illustrated in Figure \ref{fig:4cycles_samples}.
\end{itemize}
\begin{itemize}
	\item 
	SKIP-CIRCLES \cite{chen2019equivalence} contains circular graphs with skip links and the labels (10 classes) are the lengths of the skip links among $\{2, 3, 4, 5, 6, 9, 11, 12, 13, 16\}$, as illustrated in figure \ref{fig:skipcircles_samples}.
\end{itemize}

\paragraph{Details on the experiments reported in the paper.}  The experiments reported in the main paper, focus on the adjacency
matrices for $\mC_i$ for which two flavours of $\TFGW$ are investigated: 1) in $\TFGW$-fix we fix the
templates by sampling \emph{one template per class} from the training dataset (this can be seen as a simpler FGW feature extraction);
2) for $\TFGW$ we learn the templates from the training
data (as many as the number of classes). For both methods we used the FGW fixing $\alpha=1$ (i.e the GW distance) as degrees are not discriminant for these datasets. We fixed for both methods the same MLP learned to predict labels from the TFGW embeddings. This MLP ($\psi_v$) contains 2 layers of 128 hidden units each, with ReLU activations. The models are learnt for 1000 epochs using Adam optimizer with an initial learning rate of 0.01 and taking the whole train dataset as a batch. For DropGIN \cite{papp2021dropgnn} and GIN \cite{xu2018powerful} we replicated their experiments taking the same settings than the ones described by \cite{papp2021dropgnn}. For 4-CYCLES, they used 4 GIN layers composed of 2 layers each with 16 hidden units and batch normalization. For SKIP-CIRCLES, they used 9 similar GIN layers except that for each GIN layer the number of layer-wise hidden units is set to 32 instead of 16. Finally, as prescribed by \cite{papp2021dropgnn} we set the number of runs to $r=50$ and the node dropout probability $p=\frac{2}{m}$ where $m$ is the mean number of nodes in the graphs in the dataset. These methods also use the Adam optimizer with an initial learning rate of $0.01$, where the learning rate by 0.5 every 50 epochs during 1000 epochs. Switching their optimization scheme to the ones used for TFGW did not change the reported results so we kept the one from the original paper.
\begin{table}[!t]
	\centering
	\caption{Statistics on real datasets considered in our benchmark.}
	\label{tab:data_statistics}
	\scalebox{0.7}{
		\begin{tabular}{|l|r|r|r|r|r|r|r|r}
			\hline
			datasets &  features &  \#graphs & \#classes & mean \#nodes  &  min \#nodes & max \#nodes & median \#nodes\\ \hline
			MUTAG & $\{0..6\} $ & 188 & 2 & 17.93&10 & 28 & 17.5 \\ \hline
			PTC-MR& $\{0,..,17\}$ & 344 & 2 &  14.29& 2 & 64 & 13  \\ \hline
			ENZYMES& $\R^{18}$ & 600 & 6& 32.63 & 2 & 126 & 32 \\ \hline
			PROTEIN& $\R^{29}$ & 1113& 2 & 29.06& 4 & 620& 26\\ \hline
			NCI1 &$\{0,...,36\}$ & 4110& 2 & 29.87& 3 & 111& 27\\ \hline
			IMDB-B &     None &  1000   &  2 & 19.77 & 12 & 136 & 17\\ \hline
			IMDB-M &     None & 1500 & 3 & 13.00 &  7 & 89 & 10  \\ \hline
			COLLAB & None & 5000 & 3 & 74.5 & 32& 492 & 52 \\\hline
	\end{tabular}}
\end{table}

\paragraph{Additional experiments.} \begin{wrapfigure}{r}{0.3\textwidth}  \vspace{-7mm}
	\begin{center}
		\includegraphics[width=\linewidth]{./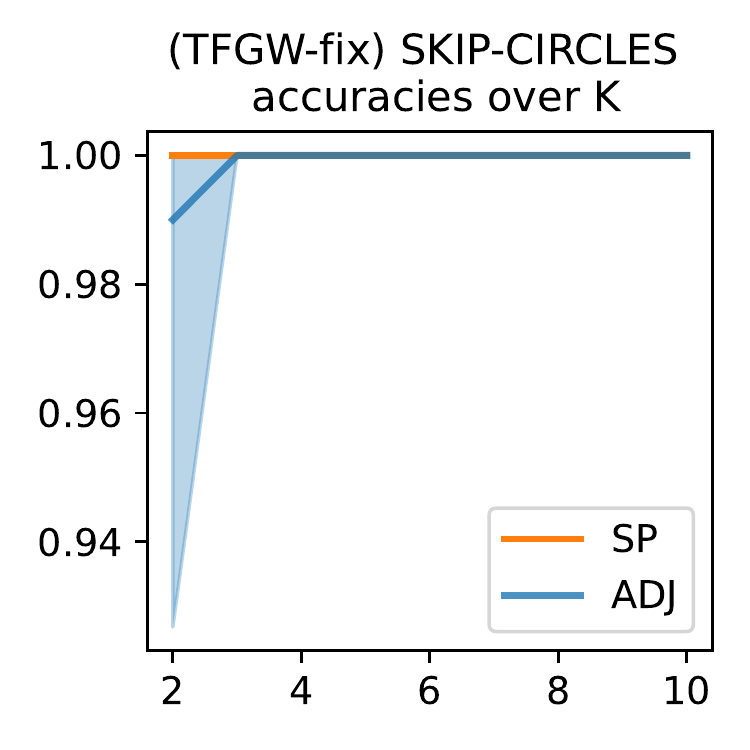}
	\end{center}
	\vspace{-5mm}
	\caption{\label{fig:skipcircles_tfgwfix} Test accuracies on SKIP-CIRCLES of TFGW-fix for $K \in \{2,..., 10\}$.} 
\end{wrapfigure}
To further emphasize the discriminative power of the TFGW embeddings, we report here additional experiments conducted on the SKIP-CIRCLES simulated datasets. For adjacency (ADJ) and shortest-path (SP) matrices as $\mC_i$, instead of using one template per class ($K=10$) as reported in the main paper for the sake of conciseness, we stress the TFGW-fix models by learning on $K \in \{2,...,10\}$ fixed templates sampled from the dataset. We report in Figure \ref{fig:skipcircles_tfgwfix}, the test accuracies averaged over 10 simulations with the same other settings than for the previously reported experiments. The averaged accuracies are illustrated in bold, while the intervals between the minimum and the maximum accuracy across runs is illustrated with a lower intensity. We can see that both methods perfectly distinguish the classes using at most 3 templates. Moreover only 2 suffice to achieve such performance using SP matrices, which is not the case for ADJ matrices. These results support our detailed analyzes in section \ref{sec:experiments} of the paper where the SP matrices are shown to better perform than ADJ ones, when no pre-processing of the node features is used.
\section{Complement on the experiments on real datasets}
We detail here few aspects of our experiments on real datasets reported in subsection \ref{sec:classif} and \ref{sec:sensiti} of the main paper. We first report some statistics on these datasets in Table \ref{tab:data_statistics}. 

\paragraph{Graph Classification benchmark.} We complete here the description of the settings and the validated hyper-parameters that we used in our benchmark whose results are reported in Table \ref{tab:benchmark} of the main paper.

For our method TFGW, we validate the number of templates $K$ in $\{\beta |\mathcal{Y}|\}_\beta$, with $\beta \in \{2,4,6,8\}$ and $|\mathcal{Y}|$ the number of classes. Only for ENZYMES with 6 classes of 100 graphs each, we validate $\beta \in \{1, 2, 3, 4\}$.
All parameters of our $\TFGW$ layers are learned, namely the templates structure $\overline{\mC}_k$, feature matrix $\overline{\mF}_k$, the weights $\overline{\vh}_k$, and finally a single trade-off parameter $\alpha$. Moreover these templates are initialized by randomly sampling from the train dataset, a same number of graphs for each class. We also learn $\phi_\vu$ taken as a GIN architecture
\cite{xu2018powerful} composed of $L=2$ GIN layers aggregated using the Jumping Knowledge (concatenation) scheme \cite{xu2018representation}. Every GIN layer is a MLP of 2 layers with batch normalization, whose number of units is validated in $\{16, 32\}$ for bioinformatics datasets and fixed to 64 for social network datasets, as in \cite{xu2018powerful}. For predictions, the same MLP $\psi_v$ than for the experiments on synthetic datasets is used. Finally a dropout technique is applied to $\psi_v$ with a rate validated in $\{0, 0.2, 0.5\}$. We learn our models over 500 epochs using Adam optimizer with an initial learning rate of $0.01$ and a batch size of 128.

For OT-GNN \cite{chen2020optimal} which is the approach the most similar to TFGW, the exact same settings and validated hyper-parameters are considered. For WEGL \cite{kolouri2020wasserstein}, we validated the number of layers $L \in \{1,2,3,4\}$ for their non-parametric embeddings, either using the final embeddings or the concatenation of embeddings per layer, then learnt Random Forest classifiers shown to achieve the best results on average across datasets, whose hyper-parameters are validated as in the original paper. For GIN \cite{xu2018powerful} and DropGIN \cite{papp2021dropgnn} which share the same architecture than TFGW except for the pooling strategy (sum aggregation instead of FGW distances), we also validated the same hyper-parameters than for TFGW. For Patchy-SAN \cite{niepert2016learning}, we validated their receptive field parameter in $k \in \{5,10,10^E\}$ considering the same other settings than the authors. For DIFFPOOL \cite{ying2018hierarchical}, following the discussion of the authors, we validated for their default version the number of pooling layer in $\{1,2\}$, the clustering ratios in $\{10\%, 25\%\}$. Finally for the kernel methods, we cross validated the SVM parameters $C \in \{10^{-7}, 10^{-6},..., 10^7\}$ and $\gamma \in \{ 2^{-10}, 2^{-9},..., 2^{10}\}$ using the scikit-learn implementation \cite{sklearn_api}. Then for the FGW kernel \cite{titouan2019optimal}, we validated 15 values of the trade-off parameter $\alpha$ via a logspace search in $(0, 0.5)$ and symmetrically
$(0.5, 1)$. For WL \cite{shervashidze2011weisfeiler} and WWL \cite{Togninalli19}, the number of WL step is validated in $\{1,...,10\}$ while taking the discrete and continuous versions of the WL refinement suggested in \cite{Togninalli19}.

 \begin{figure}[t!]
 	\begin{center}
 		\includegraphics[height=3cm]{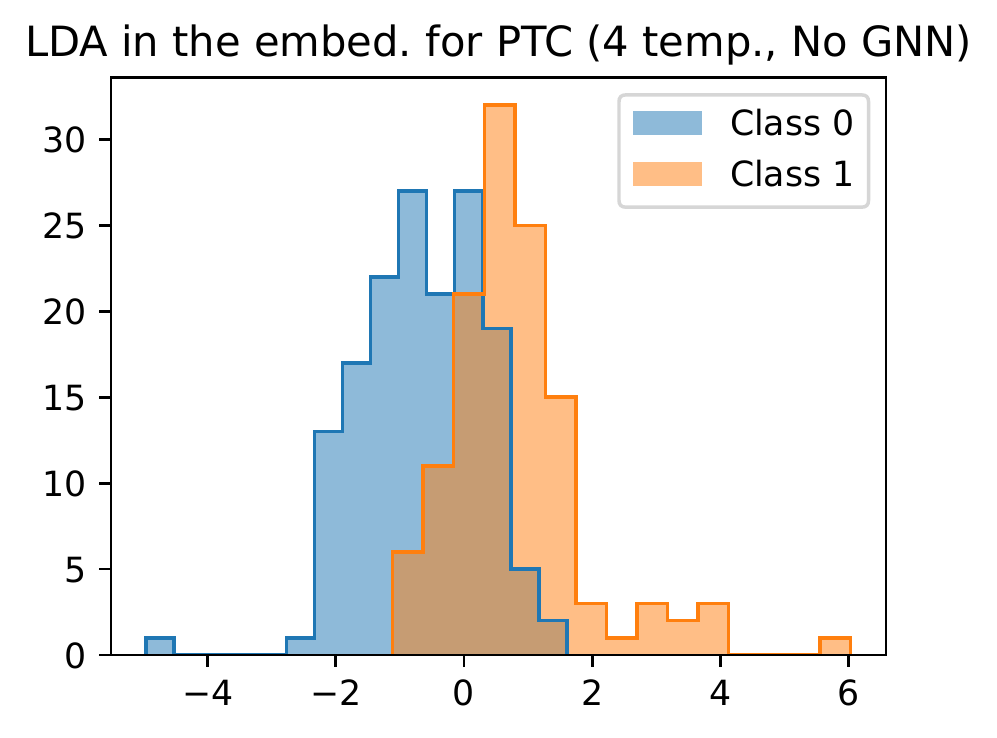}\hspace{5mm}
 		\includegraphics[height=3cm]{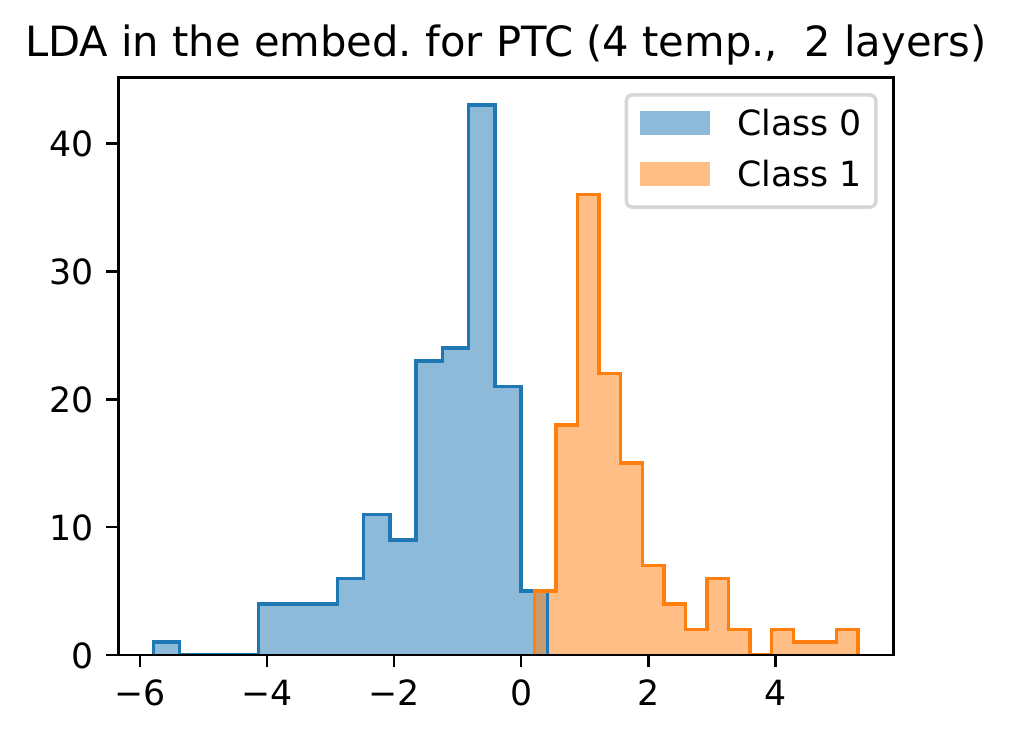}\hspace{5mm}
 		\includegraphics[height=3cm]{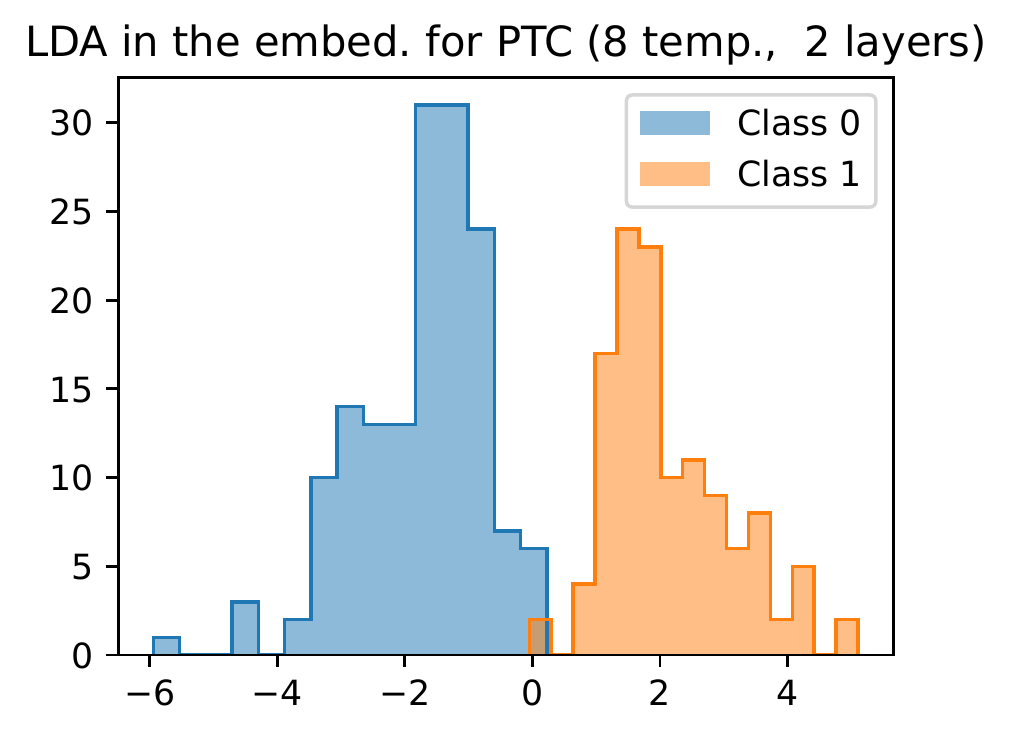}
 	\end{center}\vspace{-2mm}
 	\caption{LDA 1D projections of the distance embeddings for different
 		models learned on PTC.\label{fig:lda} }
 \end{figure}
 \begin{figure}[t!]
 	\begin{center}
 		\includegraphics[height=3.5cm]{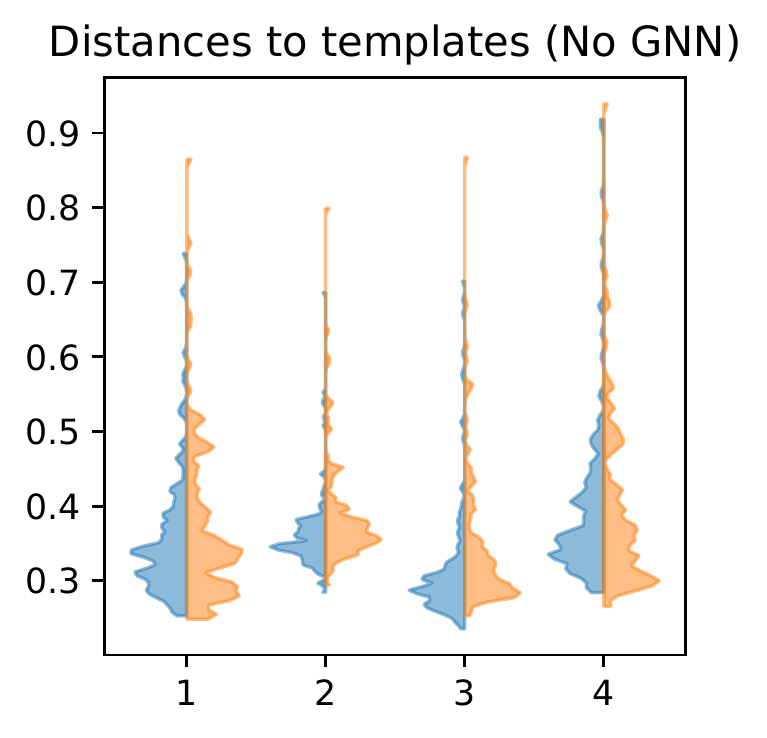}
 		\includegraphics[height=3.5cm]{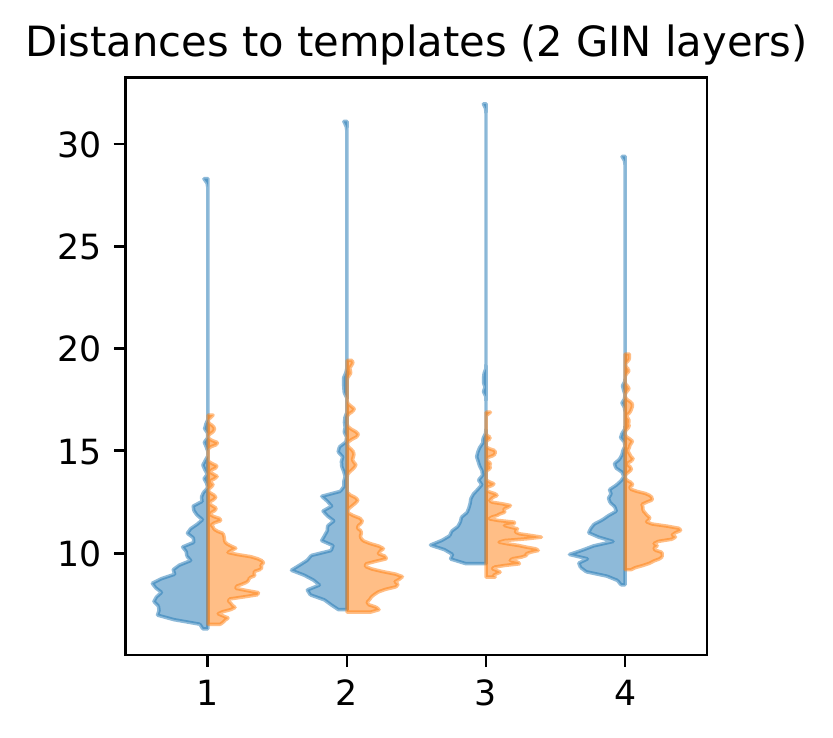}
 		\includegraphics[height=3.5cm]{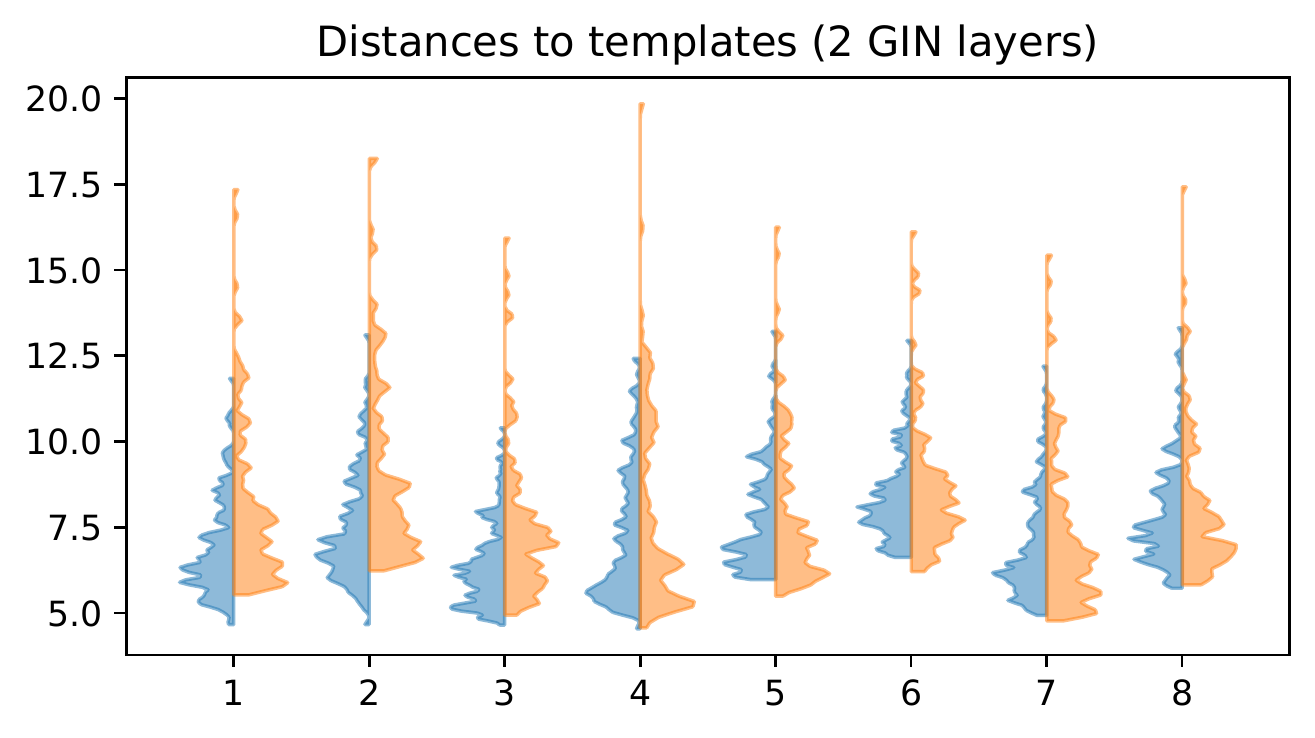}
 	\end{center}\vspace{-2mm}
 	\caption{Distributions of the distance to the templates fo each templates
 		for different models learned on PTC.\label{fig:violin} }
 \end{figure}
 
\paragraph{Additional visualization of the TFGW embeddings.} In this paragraph we complete the analysis of our TFGW embeddings detailed in the subsection \ref{sec:sensiti} of the main paper. Especially, we illustrate the LDA (Figure \ref{fig:lda}) and the distributions (Figure \ref{fig:violin}) of our distance embeddings learned on PTC with $L=0$ and $L=2$, and the number of templates $K$ varying in $\{4, 8\}$. Note that these embeddings are the same than studied thanks to a PCA in the main paper. First, the figure \ref{fig:lda} supports our conclusions regarding the separability of our embeddings, which becomes more linear when increasing the number of GIN layers from $L=0$ to $L=2$ and the number of templates from $K=4$ to $K=8$. Then, the distribution of the distances in figure \ref{fig:violin} exhibits that the discrimination between samples of different classes is achieved through the modes of these distributions. One can also notice that the range of distances increases considerably from $L=0$ to $L=2$. This coincides with the fact that the learned templates are extreme points in the embedding, as illustrated in the main paper, which might encode "exaggerated" features in order to maximize the margin between classes in the embedding. An instance of such learned templates are illustrated in figure \ref{fig:templates}. By comparing these templates (on the left) with samples from the dataset (on the right), we can clearly see that the learned templates do not represent realistic graphs from the data. Such behavior was to be expected in our end-to-end framework where the prediction task is achieved by a MLP with non-linear activations. However we believe that our promising results achieved thanks to our $\TFGW$ modeling can open the door to novel and
hopefully more interpretable end-to-end architectures.
\begin{figure}[!t]
	\begin{center}
			\includegraphics[height=4.5cm]{./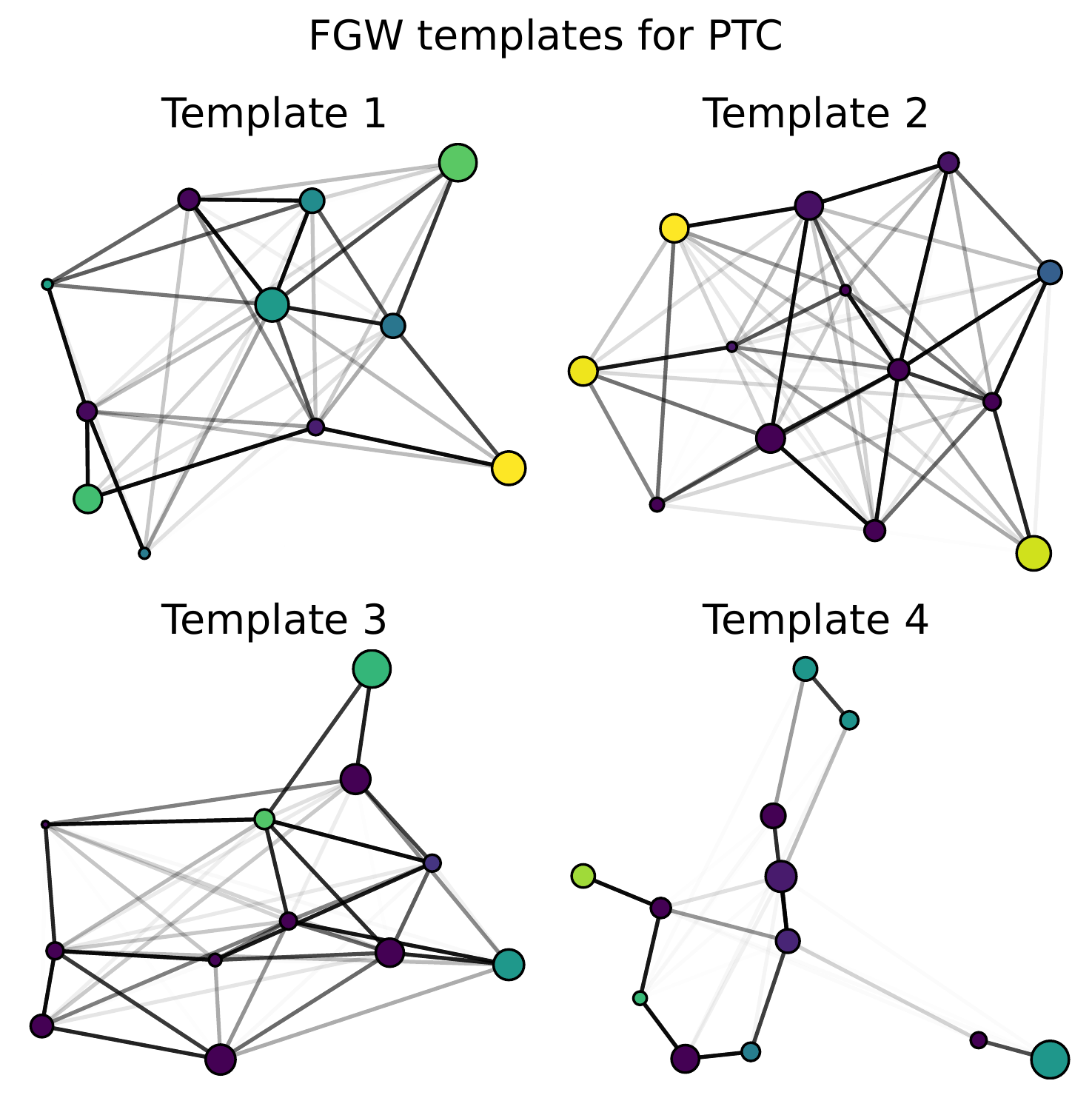} \hfill \includegraphics[height=4.5cm]{./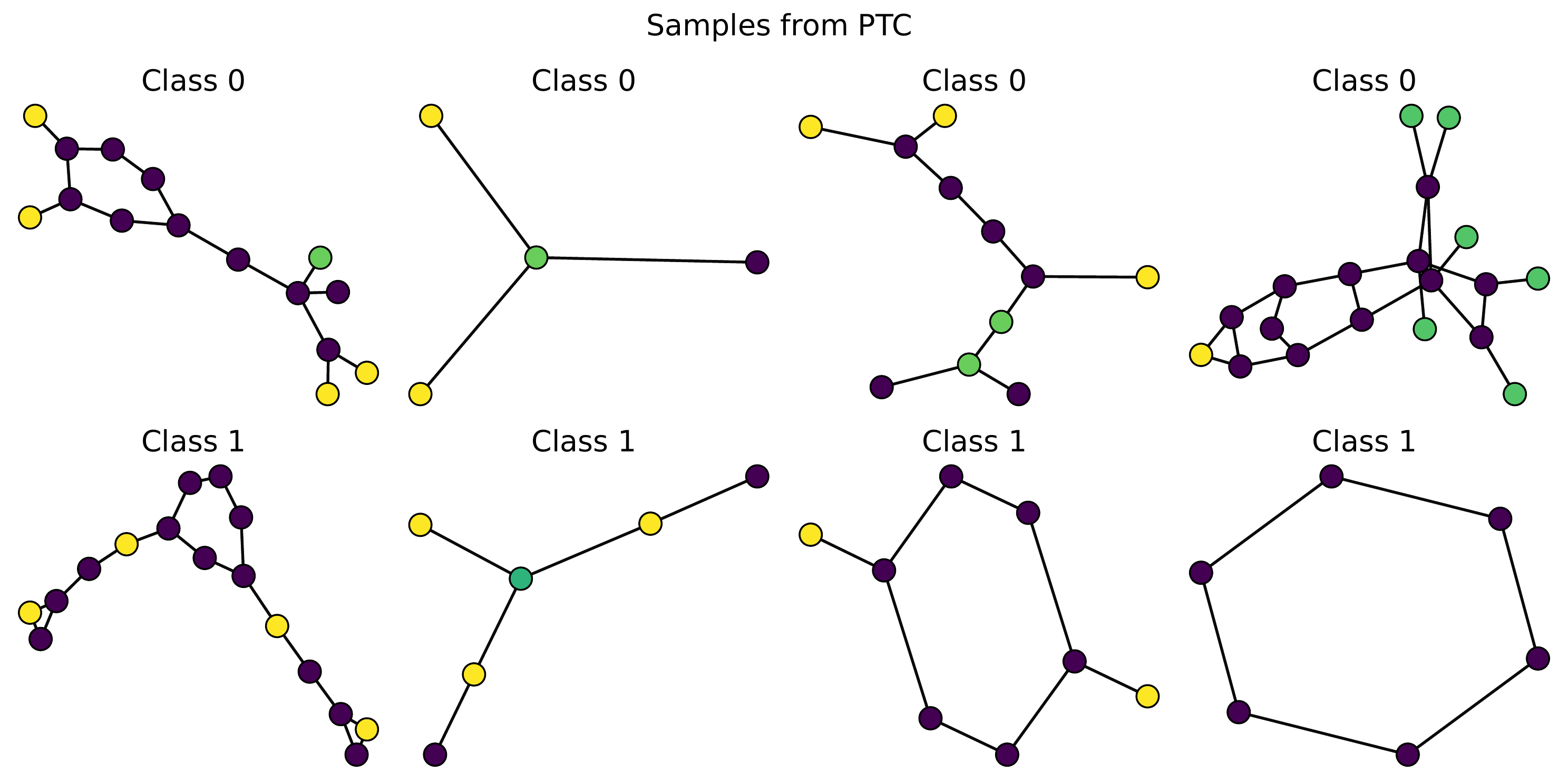}
	\end{center}
\caption{\label{fig:templates} Illustration of the templates learned on PTC with $K=4$ and $L=0$ (on the left side), and some samples from this dataset (on the right sight). The graph structures are represented using the entries of $\overline{\mC}_k$ (resp. $\mC_i$) as repulsive strength and the corresponding edges are colored in shades of grey (black being the maximum). The node colors are computed based on their features $\overline{\mF}_k$ (resp. $\mF_i$). The nodes size are made proportional to the weights $\overline{\vh}_k$ (resp. $\vh_i$).}
\end{figure}

\paragraph{Sensitivity of GIN to the number of layers.} \begin{wrapfigure}{r}{0.5\textwidth}  \vspace{-2mm}
	\begin{center}
		\includegraphics[width=\linewidth]{./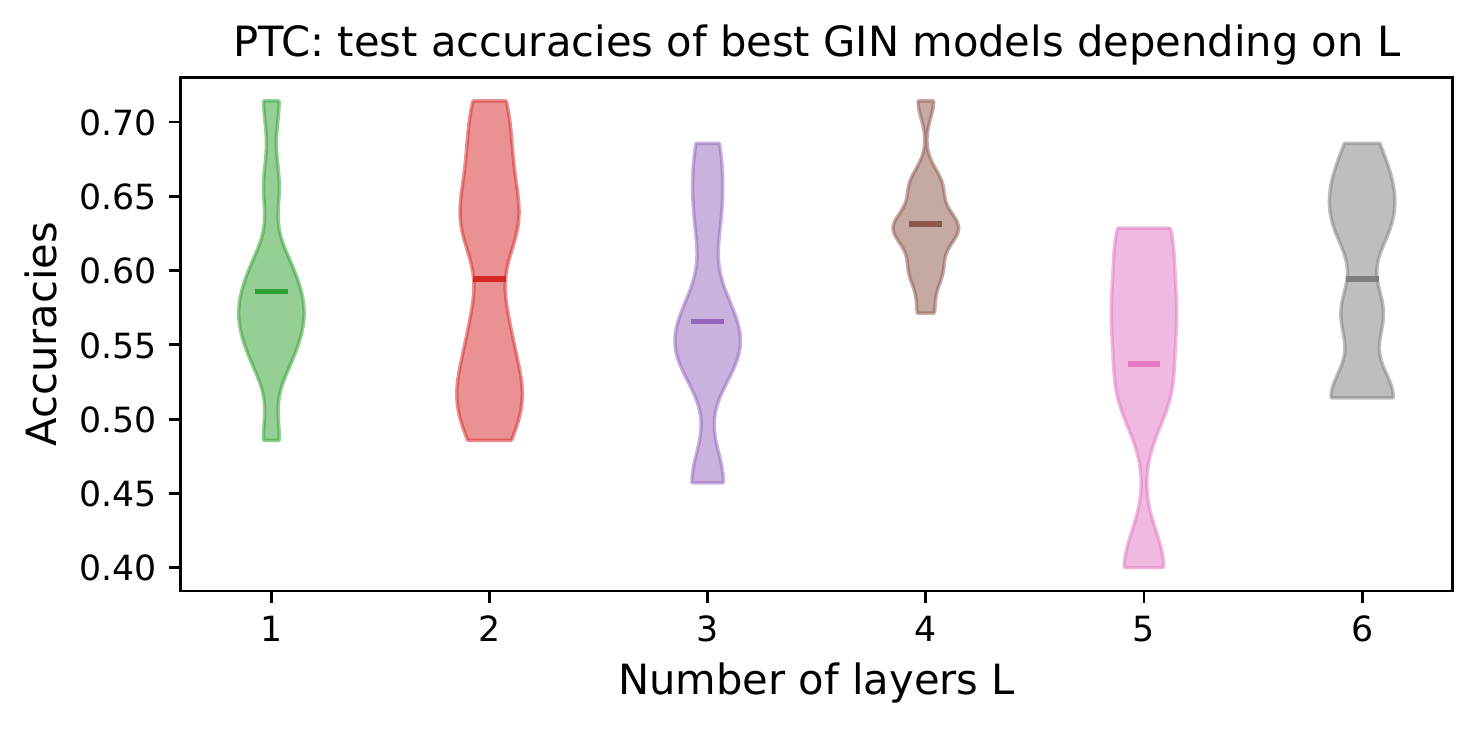}
	\end{center}
	\vspace{-5mm}
	\caption{\label{fig:ptc_gin} Test accuracies on PTC of GIN for $L \in \{1,..., 6\}$.} 
\end{wrapfigure}We aim here to benchmark our sensitivity analysis to the number of templates and the number of GIN layers of TFGW, reported in the subsection \ref{sec:sensiti} of the main paper. 
 To this end, we learned GIN models with a number of GIN layers varying in $L \in \{1,2,...,6\}$, using analog settings and validation than detailed in our graph classification benchmark. The test accuracies of the validated models for each $L$ are reported in Figure \ref{fig:ptc_gin}. First, the model with $L=4$ (default for the method \cite{xu2018powerful}) leads to best performances on average. Then, no clear pattern of overfitting is observed for $L>4$, as indeed $L=5$ leads to worst performances but $L=6$ leads to the second best model in this benchmark. Such behavior may come from the Jumping Knowledge scheme (with concatenation) \cite{xu2018representation} as argued by the authors.

	\label{sec:supplementary}
	
\end{document}